\newtheorem{lemma}{Lemma}
\newtheorem{theorem}{Theorem}
\theoremstyle{remark}
\newtheorem{remark}{Remark}
\begin{document}
\title{1-Bit Compressive Sensing for Efficient Federated Learning Over the Air}
\author{ Xin Fan$^{1}$, Yue Wang$^2$,~\IEEEmembership{Member,~IEEE}, Yan Huo$^{1}$,~\IEEEmembership{Senior Member,~IEEE}, and Zhi Tian$^2,~\IEEEmembership{Fellow,~IEEE}$\\
$^{1}$School of Electronics and Information Engineering, Beijing Jiaotong University, Beijing, China\\
$^2$Department of Electrical \& Computer Engineering, George Mason University, Fairfax, VA, USA
\\E-mails: \{yhuo,fanxin\}@bjtu.edu.cn, \{ywang56,ztian1\}@gmu.edu}
\maketitle

\begin{abstract}% fast and accurate learning algorithms.
%Distributed learning integrated with deep generative models is a promising solution to make full use of rich distributed data while protecting data privacy in Industrial Internet of Things. However, the communication between local devices (workers) and a parameter server is one of the major barriers for distributed learning among collaborative workers.
For distributed learning among collaborative users, this paper develops and analyzes a communication-efficient scheme for federated learning (FL) over the air, which incorporates 1-bit compressive sensing (CS) into analog aggregation transmissions. To facilitate design parameter optimization, we theoretically analyze the efficacy of the proposed scheme by deriving a closed-form expression for the expected convergence rate of the FL over the air. Our theoretical results reveal the tradeoff between convergence performance and communication efficiency as a result of the aggregation errors caused by sparsification, dimension reduction, quantization, signal reconstruction and noise.
Then, we formulate 1-bit CS based FL over the air as a joint optimization problem to mitigate the impact of these aggregation errors through joint optimal design of worker scheduling and power scaling policy. An enumeration-based method is proposed to solve this non-convex problem, which is optimal but becomes computationally infeasible as the number of devices increases. For scalable computing, we resort to the alternating direction method of multipliers (ADMM) technique to develop an efficient implementation that is suitable for large-scale networks. Simulation results show that our proposed 1-bit CS based FL over the air achieves comparable performance to the ideal case where conventional FL without compression and quantification is applied over error-free aggregation, at much reduced communication overhead and transmission latency.  %Hence, we further propose a suboptimal solution based on the alternating direction method of multiplier to reduce the complexity when applied in large-scale networks. Simulation results show that our proposed 1-bit CS based FL over the air achieves comparable performance to the ideal case where conventional FL without compression and quantification is applied over error-free aggregation, at much reduced communication overhead and transmission latency.    %a considerable performance while effectively reduces the communication overhead and transmission latency, compared to the ideal case where the conventional FL is implemented over error-free aggregation.
\end{abstract}
\begin{IEEEkeywords}
Federated learning, analog aggregation, 1-bit compressive sensing, convergence analysis, joint optimization.%, device scheduling, power scaling.
\end{IEEEkeywords}

\section{Introduction}
Centralized machine learning (ML) that collects distribute data from edge devices (local workers) to a parameter server (PS) for data analysis and inference is becoming increasingly costly for communications. Although the adoption of approximate data aggregation instead of exact data aggregation proposed by \cite{he2015approximate,li2017approximate} can effectively reduce communication costs, the privacy issues exposed by data collection cannot be ignored.
As an alternative, federated learning (FL) is a promising paradigm that enables many local workers to collaboratively train a common learning model under the coordination of a PS in wireless networks \cite{mcmahan2017communication,yang2019federated,chen2020joint}. In each round of iteration-based FL, starting from a common learning model received from the PS, local devices (workers) proceed to train the model with their own local data by updating their local model parameters, and then transmit their local updates to the PS. Next, all the collected local updates are averaged at the PS and then sent back to local workers for the next round local updates. Without exchanging raw datasets during the iterations between the PS and local workers, FL offers distinct advantages on protecting user privacy and leveraging distributed on-device computation compared to traditional learning at a centralized data center.

%Since updates shared between local workers and the PS can be extremely large, e.g., the VGGNet architecture has approximately 138 million parameters, the communication overhead becomes a main bottleneck of FL \cite{mcmahan2016communication,jordan2019communication}. To reduce the communication load per worker, pre-processing of updates has been considered in the literature,

In FL, updates shared between local workers and the PS can be extremely large, e.g., the VGGNet architecture has approximately 138 million parameters. As a result, the pre-processing of updates has been considered in the literature to reduce the communication load per worker,
 such as \emph{sparsification}, \emph{quantization} and \emph{communication censoring} schemes. Given the compressible nature of gradients where only a small percentage of entries have large values while the rest remain at relatively small values \cite{lin2018deep}, \emph{sparsification} schemes only keep the large values of local updates to reduce the communication load\cite{aji2017sparse, lin2017deep}. \emph{Quantization} is to compress the continuous-valued update information to a few finite bits so that it can be effectively communicated over a digital channel\cite{liu2019decentralized,seide20141, alistarh2017qsgd}. \emph{Communication censoring} is to evaluate the importance of each update in order to avoid less informative transmissions\cite{liu2019communication,8755802,chen2018lag,8646657,xu2020coke}. All these useful strategies are investigated predominantly for FL with digital communications. %\emph{Sparsification} and \emph{Quantization} are both the lossy compression of updates.

%by update compression, such as exploiting coding schemes \cite{tandon2017gradient,raviv2017gradient,ye2018communication}, the sparsity of updates \cite{aji2017sparse, lin2017deep} and adjusting quantization of the updates \cite{liu2019decentralized,seide20141, alistarh2017qsgd}, or by avoiding less informative local updates via communication censoring schemes \cite{liu2019communication,8755802,chen2018lag,8646657,xu2020coke}.

However, the communication overhead and transmission latency of FL over digital communication channels are still proportional to the number of active workers, and thus cannot be applicable in large-scale environments. To overcome this problem, an analog aggregation model is recently proposed for FL\cite{zhu2019broadband,cao2019optimal,yang2020federated,zhu2020one,amiri2020machine, amiri2020federated1,amiri2019collaborative,sun2019energy} by allowing multiple workers to simultaneously transmit their updates over the same time-frequency resources and then applying an average-enabled computation-over-the-air principle\cite{nazer2007computation}. It benefits from the fact that FL only relies on the averaged value of distributed local updates rather than their individual values. Exploiting the waveform superposition property of a wireless multiple access channel (MAC), analog aggregation automatically enables to directly obtain the averaged updates required by FL, which prompts the prosperity of analog aggregation based FL. %As a joint transmission-and-aggregation policy, analog aggregation transmission enables all the participating workers to simultaneously upload their local model updates to the PS over the same time-frequency resources as long as the aggregated waveform represents the averaged updates, thus substantially saving the uplink communication bandwidth for FL.%, due to the fact that they are stillThis benefits from the fact that FL only utilizes the averaged updates of distributed workers rather than the individual local update from each worker.
In \cite{zhu2019broadband}, a broadband analog aggregation scheme was designed for FL, in which a set of tradeoffs between communication and learning are derived for broadband power control and device scheduling, where the learning metric is set as the fraction of scheduled devices.  In \cite{cao2019optimal}, a jointly optimization was proposed to minimize the mean square error (MSE) of the aggregated signal. Similarly, a joint design of device scheduling and beamforming was presented in \cite{yang2020federated} for FL over
the air in multiple antenna systems, which aims to maximize the number of selected workers under the given MSE requirements.  Based on one-bit gradient quantization, a digital version of broadband over-the-air aggregation was proposed, and the effects of wireless channel hostilities on the convergence rate was analyzed in \cite{zhu2020one}. %Note that all the works \cite{zhu2019broadband,cao2019optimal,yang2020federated,zhu2020one} assume the local updates to be independent and identically distributed (IID) with zero mean and unit variance.The learning metric is equal to the fraction of scheduled devices, and power control and device scheduling depends on the corresponding channel condition forming a truncation-based approach.
In \cite{amiri2020machine, amiri2020federated1}, the gradient sparsification, and a random linear projection for dimensionality reduction of large-size gradient in narrow-band channels was considered to reduce the communication requirements. The power allocation scheme in \cite{amiri2020machine, amiri2020federated1} scales the power of the vectors containing the gradient information of different
devices to satisfy the average power constraint. %local to meet the average transmit power constraints

%Due to the channel fading of wireless networks and the increasing availability of local devices, worker selection and power control are crucial to achieve a reliable and high-performance for analog aggregation based FL. Since analog aggregation based FL is still at its infancy,

Despite the prior work, some fundamental questions remain  unanswered, which however prevent from achieving communication-efficient and high-performance FL with analog aggregation.
%unexplored to achieve a reliable and high-performance for analog aggregation based FL. %for giving a proper worker selection and power control.
Firstly, the quantitative relationship between FL and analog aggregation communication is not clear. Simple maximization of the number of participated workers is learning-agnostic and hence not necessarily optimal, which decouples the optimization of computation and communication, e.g., the works in \cite{sun2019energy,yang2020federated}. Secondly, to facilitate power control, most existing works are developed based on a strong assumption that the signals to be transmitted from local workers, i.e., local gradients, can be normalized to have zero mean and unit variance\cite{cao2019optimal,zhu2019broadband,yang2020federated,zhu2020one}. However, gradient statistics in FL vary over both training iterations and feature dimensions, and are unknown a priori\cite{zhang2020gradient}. Thus, it is infeasible to design an optimal power control without prior knowledge of the local gradients at the PS, especially for the non-coding linear analog modulation in analog aggregation based FL. Thirdly, sparsification is introduced for communication efficiency in analog aggregation based FL \cite{amiri2020machine, amiri2020federated1} as a means of lossy compression of local gradients, which may introduce aggregation errors, but the impact of these aggregation errors on FL is not yet clear let alone how to alleviate their side effects. %alleviating them.% in 1-bit compressive sensing

To solve the aforementioned issues, in this paper, we introduce 1-bit compressive sensing (CS) for efficient FL over the air, by developing an optimized practical worker selection and power control policy. To the best of our knowledge, this is the first work to introduce 1-bit CS \cite{boufounos20081,jacques2013robust,dai2016noisy} into FL over the air for high communication efficiency, where both the dimension of local gradients and the number of quantization bits can be reduced significantly.
Further, thanks to the 1-bit quantization, our power control becomes feasible since it hinges on the quantized values of known magnitude, without relying on any prior knowledge or assumptions on gradient statistics or specific distribution.
More importantly, our work provides an essential interpretation on the relationship between FL and analog aggregation with 1-bit CS techniques to enable joint optimization of computation and communications. Our main contributions are outlined below:

%In our work, due to the application of 1-bit compressive sensing \cite{boufounos20081,jacques2013robust,dai2016noisy}, the dimension of local gradients is reduced for achieving communication-efficient FL over the air. And power control becomes feasible due to the 1-bit quantization of communicated signals, which is equivalent to knowing the communicated signals in advance at the PS. In addition, we provide the specific relationship between FL and analog aggregation communication with 1-bit compressive sensing for joint optimization of computation and communication. Our main contributions are outlined below:%(treating the communicated signals $x$ as ``sign $(x)$")Building on one-bit CS, we design an elaborate analog aggregation based FL, called OBCSAA.
\begin{itemize}
\item We propose an \textbf{o}ne-\textbf{b}it \textbf{CS} \textbf{a}nalog \textbf{a}ggregation (OBCSAA) for efficient FL. In our OBCSAA, we elaborately design a set of preprocessing, analog aggregation transmission, signal reconstruction solutions to achieve communication-efficient FL.%On the one hand, the communication load per worker is greatly reduced through the preprocessing of sparsification, dimension reduction and quantization. On the other hand, since all the preprocessed vectors are transmitted to the PS in the same time-frequency resource, the communication bandwidth is dramatically saved.
 %In our OBCSAA, workers first sparsify their local gradients, by using  top-$\kappa$ sparsification. These sparsified gradient vectors are then projected to lower dimensional spaces through a pseudo-random linear projection matrix, as in a CS manner. The lower dimensional vectors are quantized by 1-bit quantization, and then all the workers simultaneously transmit their preprocessed vectors to the PS in the same time-frequency resource. The PS reconstructs the averaged sparse gradients from its noisy observations via Binary Iterative Hard Thresholding (BIHT) algorithm \cite{jacques2013robust} is used for sparse signal reconstruction). %vectorstries to reconstruct the sum of the actual sparse gradient vectors from its noisy observation. We use the Binary Iterative Hard Thresholding (BIHT) algorithm for signal reconstruction at the PS \cite{jacques2013robust}.
\item We derive a closed-form expression for the expected convergence rate of our OBCSAA. This closed-form expression measures the performace tradeoff as a result of the aggregation errors caused by sparsification, dimension reduction, quantization, signal reconstruction and additive white gaussian noise, which provides a fresh perspective to design analog wireless systems.

    \item Guided by the theoretical results, we formulate a joint optimization problem of computation and communication to optimize the worker selection and power control. Given the practical limitation on allowable peak transmit power and available bandwidth, this optimization problem aims to mitigate the aggregation errors. To solve this non-convex optimization problem, we propose two solutions: the enumeration-based method and the alternating direction method of multipliers (ADMM) approach for the scenarios of small networks and large networks, respectively. %, which are applied to the scenarios where the number of participated workers are small and large, respectively. caused by analog aggregation wireless communications, top-$\kappa$ sparsification, and 1-bit CS
\end{itemize}

We evaluate the proposed OBCSAA in solving image classification problems on the MNIST dataset. Simulation results show that our proposed OBCSAA achieves comparable performance to the ideal case where FL is implemented by perfect aggregation over error-free wireless channels, with much enhanced communication efficiency.

It is worth noting that, unlike its digital communication counterpart, the optimization design for FL over the air faces a much reduced degree of freedom due to analog aggregation, and is not yet well explored in the literature \cite{zhu2019broadband,cao2019optimal,yang2020federated,zhu2020one,amiri2020machine, amiri2020federated1}.  Compared to \cite{zhu2019broadband,cao2019optimal,yang2020federated} that consider the fraction of scheduled devices as the learning metric which separates communication and computation, our learning metric is learning convergence with respect to CS and communication factors, which hence provides the exact relationship between communication and computation. Different from \cite{zhu2019broadband,cao2019optimal,yang2020federated,zhu2020one} developed on the assumption that the local updates have to follow independent and identically distributed (IID) with zero mean and unit variance, our work adopts 1-bit CS, which enables to achieve power control for individual workers even without any gradient statistical information required by \cite{zhu2019broadband,cao2019optimal,yang2020federated,zhu2020one}. Compared to \cite{amiri2020machine, amiri2020federated1}, our work not only applies the 1-bit quantization after dimensionality reduction, but also provides a convergence analysis on 1-bit CS based FL over the air, which leads to a joint optimization of computation and communications.In short, our work is a holistic integration of gradient sparsification, dimensionality reduction, and quantization for efficient FL over the air.

%1-bit compressive sensing to solve the issue that local updates can not be known at the PS in advance. And our work employs gradient sparsification, dimensionality reduction and quantization. Compared to \cite{amiri2020machine, amiri2020federated1}, our work further uses a 1-bit quantization after dimensionality reduction, and we provide the convergence analysis which is then used for joint optimization of communication and computation.

%One category  is to reduce the number of workers who are selected to transmit their local updates. The worker selection is decided based on the update significance, in terms of computing speed \cite{chen2016revisiting,tandon2017gradient,raviv2017gradient,ye2018communication}, model variance \cite{kamp2018efficient} and gradient divergence \cite{chen2018lag}.The other category

The rest of this paper is organized as follows. The system model of 1-bit compressive sensing for FL over the air is presented in Section \ref{sec:Model}. The closed-form expression of the expected convergence rate is derived in Section \ref{sec:Convergence Analysis} to quantify the impact of the aggregation errors on FL. A joint optimization problem of communication and FL to optimize worker selection and power control are studied in Section \ref{sec:Joint optimization}. Numerical results are presented in Section \ref{Sec:Numerical Results}, and conclusions are drawn in Section \ref{Sec:Conclusion}.%Section \ref{Sec:Related} presents the related work.

\section{System Model}\label{sec:Model}
%\begin{figure}[tb]
%  \centering
%  \includegraphics[scale=0.6]{Figures/model.pdf}
%    \caption{Federated learning via analog aggregation from wirelessly distributed data.}\label{model}
%\end{figure}
%As shown in Fig. \ref{model},
We consider a wireless FL system consisting of a single PS and $U$ local workers. Exploiting wireless analog aggregation transmissions with 1-bit CS, the PS and all local workers collaboratively train a shared learning model.
%Through federated learning, the PS and all workers collaborate to train a common model for supervised learning and data inference, without sharing local data.

\subsection{FL Model}
Suppose that the union of all training datasets is denoted as $\mathcal{D} = \bigcup_i \mathcal{D}_i $, where $\mathcal{D}_i=\{\mathbf{x}_{i,k},\mathbf{y}_{i,k}\}_{k=1}^{K_i}$ is the local dataset and $K_i = |\mathcal{D}_i|$ is the number of data samples at the $i$-th worker, $i=1, \ldots, U$. In $\mathcal{D}_i$, the $k$-th data sample and its label are denoted as $\mathbf{x}_{i,k}$ and $\mathbf{y}_{i,k}$, $k=1,2,...,K_i$, respectively. The objective of the training procedure is to minimize the global loss function $F(\mathbf{w}; \mathcal{D})$ of the global shared learning model parameterized by $\mathbf{w} = [w^1, \ldots, w^D] \in \mathcal{R}^D$ of the dimension $D$, i.e.,
\begin{align}\label{eq:lossfopt}
 \textbf{P1:} \quad \mathbf{w}^*=\arg \min_{\mathbf{w}\in \mathcal{R}^D}& \quad F(\mathbf{w}; \mathcal{D}),
\end{align}
where $F(\mathbf{w}; \mathcal{D})=\frac{1}{K}\sum_{i=1}^U\sum_{k=1}^{K_i} f(\mathbf{w};\mathbf{x}_{i,k},\mathbf{y}_{i,k})$ is the summation of $K=\sum_{i=1}^U K_i$ sample-wise loss functions defined by the learning model.

To avoid directly uploading the raw local datasets to the PS for central training, the learning procedure in (\textbf{P1}) is conducted in a distributed manner by an
iterative gradient-averaging algorithm \cite{konevcny2016federated,mcmahan2017communication}.  Specifically, at each iteration $t$, the gradient descent (GD)\footnote{In this work, we take the basic gradient descent as an example, which can be extended to the stochastic gradient descent (SGD) by using a mini-batch at each worker for training. Note that SGD works more computation-efficient at the cost of more iterations and hence more transmissions compared to GD.} is applied at local workers in parallel to minimize the local loss functions
\begin{equation}\label{eq:locallossf}
 \text{(Local loss function)}\quad F_i(\mathbf{w}_i;\mathcal{D}_i)=\frac{1}{K_i}\sum_{k=1}^{K_i} f(\mathbf{w}_i;\mathbf{x}_{i,k},\mathbf{y}_{i,k}), \quad i=1,..., U,
\end{equation}
where $\mathbf{w}_i = [w_i^1, \ldots, w_i^D] \in \mathcal{R}^D$ is the local model parameter.
Each local worker updates its local gradient from the received global learning model given its own local dataset:
\begin{align}\label{eq:localgradient0}
  \text{(Local gradient computing)} \quad \mathbf{g}_i=\frac{1}{K_i}\sum_{k=1}^{K_i}\nabla f(\mathbf{w}_i;\mathbf{x}_{i,k},\mathbf{y}_{i,k}),\quad i=1,..., U,
\end{align}
where $\nabla f(\mathbf{w}_i;\mathbf{x}_{i,k},\mathbf{y}_{i,k})$ is the gradient of $f(\mathbf{w}_i;\mathbf{x}_{i,k},\mathbf{y}_{i,k})$ with respect to $\mathbf{w}_i$.

Then the local gradients are sent to the PS, which are aggregated as the global gradient:
\begin{align}\label{eq:globalgradient0}
\text{(Global gradient computing)}\quad \mathbf{g}=\frac{1}{K}\sum_{i=1}^{U} K_i\mathbf{g}_i,
\end{align}
and the global gradient $\mathbf{g}$ is sent back to the local workers, which is then used to update the shared model as
\begin{align}\label{eq:sharedupdate0}
  \text{(Shared model updating)}\quad \mathbf{w}&=\mathbf{w}-\alpha \mathbf{g},
\end{align}
where $\alpha$ is the learning rate.% and $\nabla f(\mathbf{w};\mathbf{x}_{i,k},\mathbf{y}_{i,k})$ is the gradient of $f(\mathbf{w};\mathbf{x}_{i,k},\mathbf{y}_{i,k})$ with respect to $\mathbf{w}$.

%When local updating is completed, each worker transmits its updated parameter $\mathbf{w}_i$ to the PS via wireless uplinks to update the global $\mathbf{w}$ as
%
%
%\begin{equation}\label{eq:g}
% \text{(Global model updating)}\quad \mathbf{w}=\frac{\sum_{i=1}^U K_i\mathbf{w}_{i}}{K},\qquad\qquad\ \
%\end{equation}
%Then, the PS broadcasts the updated $\mathbf{w}$ to all participating workers to update their local model in the next round.

The FL implements \eqref{eq:localgradient0}, \eqref{eq:globalgradient0} and \eqref{eq:sharedupdate0} iteratively, until it converges or the maximum number of iterations is reached. %It has been shown that this FL algorithm converges to the globally optimal solution of the original problem in \textbf{P1} under the conditions that $F$ is a convex function and the data transmission between the PS and workers is error-free\cite{mcmahan2016communication,wang2018cooperative}.

%Note that the implementation steps in \eqref{eq:localgradient0} and \eqref{eq:globalgradient0} only concern the computational aspect of FL, assuming perfect communication of the global $\mathbf{g}$ and local $\mathbf{g}_i$ between the PS and all workers. The communication aspect of FL is ignored. However, in practical networks, certain errors are inevitably introduced to the updates due to the imperfect characteristics of wireless channels. Besides, considering the limited transmit power and communication bandwidth at user devices, users may have to contend for communication resources when transmitting their local updates to the PS. It gives rise to the need for an efficient transmission model, along with network resource allocation in terms of worker selection and transmit power control. All these practical issues motivate our work to study FL from the perspectives of both wireless communications and machine learning.

\subsection{Analog Aggregation Transmission Model}
  %In many FL applications, such as deep neural networks, the parameter vector, and hence the gradient vectors, have extremely large dimensions., especially in the case of large model parameter dimensions and large number of workers.  Due to the bandwidth limitations,
  In the scenarios of FL applied over large-scale networks and for training a high-dimensional model parameters, the transmissions between the PS and local workers consume a lot of communication resources and cause training latency. Meanwhile, due to the transmit power and bandwidth limitations posed by practical wireless communications, the digital communication approach of transmitting and reconstructing all the gradient entries one-by-one in an individual manner is an overkill. Thus, in order to reduce the transmission overhead and speed up communication time, we propose to apply 1-bit compressive sensing \cite{boufounos20081,jacques2013robust,dai2016noisy} in FL over the air,
  which is motivated by two facts.
One is that the gradients involved in large-size learning problems usually turn out to be compressible with only a small number of entries having significant values \cite{lin2018deep}.
The other is that FL is usually running in an averaged-based distributed learning mechanism.
In our work, through gradient sparsification, the compression nature of CS allows to reduce the dimension of the transmitted gradient vectors. Meanwhile, analog aggregation enables all local workers to simultaneously use the same time-frequency resources to transmit their updates to the PS. Further, the 1-bit quantization not only minimizes the quantization overhead, but also circumvents the unrealistic requirement on known distribution of local gradients. The procedure of the proposed 1-bit CS method for FL is elaborated next.

   %which allows to reduce the dimension of the transmitted vectors and enables all the local workers to simultaneously use the same time-frequency resources to  transmit their updates to the PS. %to reduce the dimension of the gradient vectors, the workers apply 1-bit compressive sensing \cite{boufounos20081,jacques2013robust,dai2016noisy} to speed up communications.In order to save the bandwidth consumption of FL over wireless channels, we adopt analog aggregation, which allows multiple workers to simultaneously upload their local parameters to the PS by using the same time-frequency resources.

\subsubsection{Sparsification}
Before transmission at the $t$-th iteration, all local workers set all but the $\kappa$ elements of their local $\mathbf{g}_{i,t}$'s to 0, resulting $\kappa$-level sparsification denoted by
\begin{align}\label{eq:sparse}
  \tilde{\mathbf{g}}_{i,t}=\texttt{sparse}_\kappa(\mathbf{g}_{i,t}) ,
\end{align}
where $\texttt{sparse}_\kappa(\cdot)$ is a sparsification operation of a vector such that $\tilde{\mathbf{g}}_{i,t}$ is of length $D$ and sparsity order $\kappa$. In our paper, we perform a top-$\kappa$ sparsification strategy, i.e., elements with the largest $\kappa$ magnitudes are retained while other elements are set to 0.%The workers would like to transmit the non-zero entries of their sparse local vectors. Thus, they need to transmit the index of the non-zero entries to the PS separately, which results in additional data transmissions.
\subsubsection{Dimension Reduction}
To transmit the non-zero entries of their sparsified local gradient vectors, the workers need to transmit the indices and values of the non-zero entries to the PS separately, which results in additional data transmissions.
 To avoid this overhead, all workers employ the same measurement matrix $\bm{\Phi}\in \mathbb{R}^{S\times D}$ ($S \ll D$) that is a random Gaussian matrix. Note that the specific $\kappa$-nonzero indices of sparse gradients after the top-$\kappa$ sparsification are usually different worker-by-worker\footnote{When distributed workers have i.i.d. data, their $\kappa$-nonzero indices turn to appear with large overlapping}, which results in an increased sparsity-level $\bar{\kappa}$ $(>\kappa)$ for the superposition gradient signal. For reliable reconstruction of the compressed gradients, it is desired that the restricted isometry property (RIP) condition be met, that is, $\kappa U\leq S \ll D$ and each entry of $\bm{\Phi}$ i.i.d. follows $\mathcal{N}(0, \sigma_{sp}^2)$, where $\kappa U$ is the upper bound of sparisty in the combined sparse gradient, i.e., $\kappa U > \bar{\kappa}$. In addition, $\bm{\Phi}$ is shared between the workers and the PS before transmissions.

 %To satisfy the Restricted Isometry Property (RIP) \cite{bryan2013making}, $\kappa U\leq S \ll D$ and each entry of $\bm{\Phi}$ i.i.d. follows $\mathcal{N}(0, \sigma_{sp}^2)$, where $U\kappa$ is the upper bound of sparisty in the combined sparse gradient, i.e., $\kappa U > \bar{kappa}$. In addition, $\bm{\Phi}$ is shared between the workers and the PS before transmissions.

% Since the sparse locations of different workers are different, the superposition signal may not meet the $\kappa$-level sparsification\footnote{Actually, if the data samples across workers are independent identically distributed (as standard in the literature), i.e., i.i.d., the local gradients are expected to have a similar sparsity pattern as well. If we denote the superposition signal is $\bar{\kappa}$-level sparsification, then $\kappa\leq\bar{\kappa}\leq U\kappa$.}. To satisfy the Restricted Isometry Property (RIP) \cite{bryan2013making}, $U\kappa\leq S \ll D$ and each entry of $\bm{\Phi}$ i.i.d. follows $\mathcal{N}(0, \sigma_{sp}^2)$, where $U\kappa$ is the upper bound of sparisty in the combined sparse gradient. In addition, $\bm{\Phi}$ is shared between the workers and the PS before transmissions.

\subsubsection{Quantization}
  Next, 1-bit quantization is applied to $\bm{\Phi}\tilde{\mathbf{g}}_{i,t}$'s, so that the resulting compressed local gradient $\mathcal{C}(\mathbf{g}_{i,t})$ at each worker is given by
\begin{align}\label{eq:finalcompress}
\mathcal{C}(\mathbf{g}_{i,t})&=\texttt{sign}(\bm{\Phi} \texttt{sparse}_\kappa(\mathbf{g}_{i,t}))\nonumber \\
&=\texttt{sign}(\bm{\Phi} \tilde{\mathbf{g}}_{i,t}),\quad i=1,..., U,
\end{align}%projection
where $\mathcal{C}(\cdot)$ represents the overall effective operation including top-$\kappa$ sparsification, CS compression, and 1-bit quantization. %entire compressive operation. %and $\bm{\Phi} \in \mathbb{R}^{S\times D}$ is a random Gaussian sub-Nyquist sampling matrix. Since the sparse locations of different workers are different, the superposition signal may not meet the $\kappa$-level sparsification\footnote{Actually, if the data samples across workers are independent identically distributed (as standard in the literature), i.e., i.i.d., the local gradients are expected to have a similar sparsity pattern as well. If we denote the superposition signal is $\bar{\kappa}$-level sparsification, then $\kappa\leq\bar{\kappa}\leq U\kappa$.}. To satisfy the Restricted Isometry Property (RIP) \cite{bryan2013making}, $U\kappa\leq S \ll D$ and each entry of $\bm{\Phi}$ i.i.d. follows $\mathcal{N}(0, \sigma_{sp}^2)$, where $U\kappa$ is the upper bound of sparisty in the combined sparse gradient. In addition, $\bm{\Phi}$ is shared between the workers and the PS before communications.
\subsubsection{Analog Aggregation Transmission}
 After the above collecting the compressive measurements in \eqref{eq:finalcompress}, all the workers transmit their local $\mathcal{C}(\mathbf{g}_{i,t})$'s in an analog fashion, which are aggregated over the air at the PS to implement the global gradient computing step in \eqref{eq:globalgradient0}. %In doing so, the $d$-th entries $g_i^d$ from all workers, $i=1, ..., U$, are aggregated to compute $g^d$ in \eqref{eq:globalgradient0}, for any $d$.
Specifically, each local $\mathcal{C}(\mathbf{g}_{i,t})$ is multiplied with a pre-processing power control factor, denoted as $p_{i,t}$. Then, the received signal vector at the PS is given by
\begin{align}\label{eq:yt}
  \mathbf{y}_{t}&=\sum_{i=1}^U h_{i,t} p_{i,t}\mathcal{C}(\mathbf{g}_{i,t})+\mathbf{z}_{t},%=\sum_{i=1}^U  K_i b_{t}\beta_{i,t}\mathcal{C}_{i,t} +\mathbf{z}_{t},
 % \sum_{i=1}^U K_i \mathbf{p}_{i,t+1}\odot\mathbf{h}_{i,t+1}\odot\bm{\beta}_{i,t+1} \odot\mathbf{w}_{i,t+1}+\mathbf{z}_{t+1},
\end{align}
where $\mathbf{z}_{t} \sim \mathcal{N}(0,\sigma^2\mathbf{I})$ is additive white Gaussian noise (AWGN) vector, and $h_{i,t}$ denotes the channel coefficient between the $i$-th local worker and the PS at the $t$-th iteration\footnote{In this paper, we consider block fading channels, where the channel state information (CSI) remains unchanged within each iteration in FL, but may independently vary from one iteration to another. We assume that the CSI is perfectly known at both the PS and local workers.}. %, and constant within each iteration, but may vary over iterationsWe consider a worker-level schedule

Let $\beta_{i,t}$ denote the scheduling indicator, i.e., $\beta_{i,t}=1$ indicates that the $i$-th worker at the $t$-th iteration is scheduled to the FL algorithm, and $\beta_{i,t}=0$, otherwise. To implement the averaging gradient step in \eqref{eq:globalgradient0}, the signal vector of interest at the PS at the $t$-th iteration is given by
\begin{align}\label{eq:ydesired}
  \mathbf{y}^{desired}_t&=\frac{\sum_{i=1}^U K_i\beta_{i,t} \mathcal{C}(\mathbf{g}_{i,t})}{\sum_{i=1}^U K_i\beta_{i,t}}.%=\sum_{i=1}^U  K_i b_{t}\beta_{i,t}\mathcal{C}_{i,t} +\mathbf{z}_{t},
 % \sum_{i=1}^U K_i \mathbf{p}_{i,t+1}\odot\mathbf{h}_{i,t+1}\odot\bm{\beta}_{i,t+1} \odot\mathbf{w}_{i,t+1}+\mathbf{z}_{t+1},
\end{align}
%\overbrace{(\sum_{i=1}^U K_i\beta_{i,t})^{-1}\sum_{i=1}^U K_i\beta_{i,t} (\bm{\Phi} \tilde{\mathbf{g}}_{i,t})}^{\mathbf{y}^{desired}_t}

To obtain the signal vector of interest, we design the pre-processing power control factor $p_{i,t}$ as
\begin{equation}\label{power_control}
 p_{i,t}=\frac{\beta_{i,t}K_ib_{t}}{h_{i,t}},
\end{equation}
where $b_{t}$ is a power scaling factor. Through this power scaling, the transmit power at the $i$-th local worker satisfies the power limitation $P_i^{\text{Max}}$ as
\begin{equation}\label{power_limitation}
 |p_{i,t}c^s_{i,t}|^2=\left(\frac{\beta_{i,t}K_ib_{t}}{h_{i,t}}c^s_{i,t}\right)^2=\frac{\beta^2_{i,t}K^2_ib^2_{t}}{h^2_{i,t}}\leq P_i^{\text{Max}},
\end{equation}
where $c^s_{i,t} = \pm 1$ due to 1-bit quantization as the $s$-th element of $\mathcal{C}(\mathbf{g}_{i,t})=[c^1_{i,t},..., c^s_{i,t},...,c^S_{i,t}]^T$. As we can see from \eqref{power_limitation}, the power limitation is independent of the specific local gradient, which enables the optimization of power control to get rid of the prior knowledge on gradient or gradient statistics.    %, and $P_i^{\text{Max}}$ is the $i$-th worker's maximum power limitation.

After applying the pre-processing power control $p_{i,t}$ and substituting \eqref{power_control} into \eqref{eq:yt}, the received signal vector of \eqref{eq:yt} can be rewritten as

By such design of $p_{i,t}$, the received signal vector at the PS is rewritten as
\begin{align}\label{eq:ytre}
  \mathbf{y}_{t}&=\sum_{i=1}^U  K_i b_{t}\beta_{i,t}\mathcal{C}(\mathbf{g}_{i,t}) +\mathbf{z}_{t}.
 % \sum_{i=1}^U K_i \mathbf{p}_{i,t+1}\odot\mathbf{h}_{i,t+1}\odot\bm{\beta}_{i,t+1} \odot\mathbf{w}_{i,t+1}+\mathbf{z}_{t+1},
\end{align}

Upon receiving $\mathbf{y}_{t}$, the PS estimates the signal vector of interest via a post-processing operation as
\begin{align}\label{eq:gt}
  \mathbf{\hat{y}}^{desired}_t=  &(\sum_{i=1}^U K_i\beta_{i,t}b_{t})^{-1}\mathbf{y}_{t}
  =(\sum_{i=1}^U K_i\beta_{i,t})^{-1}\sum_{i=1}^U K_i\beta_{i,t} \mathcal{C}(\mathbf{g}_{i,t})
  +(\sum_{i=1}^U K_i\beta_{i,t}b_{t})^{-1}\mathbf{z}_{t},
\end{align}
where $(\sum_{i=1}^U K_i\beta_{i,t}b_{t})^{-1}$ is the post-processing factor.

\subsubsection{Reconstruction}
After obtaining $\mathbf{\hat{y}}^{desired}_t$ from \eqref{eq:gt}, the PS needs to further use a 1-bit CS reconstruction algorithm $\mathcal{C}^{-1}(\cdot)$ (e.g., binary iterative hard thresholding (BIHT) algorithm \cite{jacques2013robust}, fixed point continuation algorithms \cite{hale2007fixed}, basis pursuit algorithms \cite{moshtaghpour2015consistent} and other greedy matching pursuit algorithms \cite{donoho2006compressed}) to estimate the global gradient $\hat{\mathbf{g}}_t=\mathcal{C}^{-1}(\mathbf{\hat{y}}^{desired}_t)$.
%\begin{align}\label{eq:hatg}
% \hat{\mathbf{g}}_t=\mathcal{C}^{-1}(\mathbf{\hat{y}}^{desired}_t) \mathbf{w}_t-\alpha \hat{\mathbf{g}}_t.
%\end{align}
 Then the PS broadcasts the estimated $\hat{\mathbf{g}}_t$ to all the local workers
for updating the shared model parameter as follows
\begin{align}\label{eq:sharedupdate1}
 \mathbf{w}_{t+1}&=\mathbf{w}_t-\alpha \hat{\mathbf{g}}_t.
\end{align}

Compared \eqref{eq:sharedupdate1} and \eqref{eq:sharedupdate0}, aggregation errors may be introduced in 1-bit CS based FL over the air, due to analog aggregation transmissions, top-$\kappa$ sparsification, CS compression, and 1-bit quantization. %and 1-bit compressive sensing.

%Comparing $\hat{\mathbf{g}}_t$ with \eqref{eq:globalgradient0}, there exist differences between $\hat{\mathbf{g}}_{t}$ and $\mathbf{g}$ due to the effect of wireless communications and compressive sensing. Thus, this work is motivated to mitigate such a gap through optimizing the worker selection and power scaling for FL over the air. This is developed based on an important but unexplored foundation, i.e., how wireless communications and compressive sensing affect the convergence behavior of FL over the air, which we explore next.

\section{The Convergence Analysis}\label{sec:Convergence Analysis}
In this section, we study the effect of analog aggregation transmissions and 1-bit CS on FL over the air, by analyzing its convergence behavior.
\subsection{Basic Assumptions}
To facilitate the convergence analysis, we make the following standard assumptions on the loss function and gradients.

\textbf{Assumption 1 (Lipschitz continuity, smoothness):} The gradient $\nabla F(\mathbf{w})$ of the loss function $F(\mathbf{w})$ is $L$-Lipschitz\cite{bubeck2014convex}, that is,
\begin{eqnarray}\label{eq:Lipschitz}
\|\nabla F(\mathbf{w}_{t+1})-\nabla F(\mathbf{w}_{t})\|\leq L\|\mathbf{w}_{t+1}-\mathbf{w}_{t}\|,
\end{eqnarray}
where $L$ is a non-negative Lipschitz constant for the continuously differentiable function $F(\cdot)$.

%\textbf{Assumption 2 (strongly convex):} $\nabla F(\mathbf{w})$ is strongly convex with a positive parameter $\mu$, obeying %We assume that $\nabla F(\mathbf{w})$ is strongly convex with positive parameter $\mu$, such that:
%\begin{align}\label{eq:stronglyconvex}
%F(\mathbf{w}_{t+1})\geq F(\mathbf{w}_{t})+(\mathbf{w}_{t+1}-\mathbf{w}_{t})^T\nabla F(\mathbf{w}_{t})+\frac{\mu}{2}\|\mathbf{w}_{t+1}-\mathbf{w}_{t}\|^2.
%\end{align}

\textbf{Assumption 2 (twice-continuously differentiable):} The function $F(\mathbf{w})$ is twice-continuously differentiable and $L$-smoothness. Accordingly, the eigenvalues of the Hessian matrix of $F(\mathbf{w})$ are bounded by\cite{bubeck2014convex}:
\begin{eqnarray}\label{eq:twice-continuously differentiable}
 \nabla^2 F(\mathbf{w}_{t})\preceq L\mathbf{I}.
\end{eqnarray}
%\mu \mathbf{I}\preceq where $\mu\leq L$ are the parameters defined in \eqref{eq:Lipschitz} and \eqref{eq:stronglyconvex}.

\textbf{Assumption 3 (sample-wise gradient bounded):} The sample-wise gradients at local workers are bounded by their global counterpart\cite{Bertsekas1996Neuro,friedlander2012hybrid}
\begin{eqnarray}\label{eq:bound}
\parallel \nabla f(\mathbf{w}_{t})\parallel^2 \leq \rho_1+\rho_2\parallel \nabla F(\mathbf{w}_{t})\parallel^2 ,
\end{eqnarray}
where $\rho_1\geq0$ and $0\leq\rho_2< 1$.  %\cite{zhao2015stochastic}.

\textbf{Assumption 4 (local gradient bounded ):}
%The sum error $\mathbf{e}^r_{t}$ is assumed to be bounded by a constant $\varepsilon_t$,
The local gradients are bounded by\cite{stich2018sparsified}
\begin{eqnarray}\label{eq:gdbound}
\|\mathbf{g}_{i,t}\|^2\leq G^2, \forall i,t,
\end{eqnarray}
where $G$ is positive constant.

\subsection{Convergence Analysis}%the recovered signal at the PS
We first analyze the total error between the recovered averaged gradient in \eqref{eq:sharedupdate1} and the ideal one in \eqref{eq:sharedupdate0}, including the errors caused by sparsification, quantization, AWGN and reconstruction algorithms.
Based on the above \textbf{Assumption 4}, we derive the following \textbf{Lemma \ref{Lemma1}} to describe the total error.
\begin{lemma}\label{Lemma1}
The total error $\mathbf{e}_{t}=\hat{\mathbf{g}}_t-\mathbf{g}_t$ at the $t$-th iteration in FL is bounded by
\begin{align}
\mathbb{E}\|\mathbf{e}_{t}\|^2=\mathbb{E}(\|\hat{\mathbf{g}}_t-\mathbf{g}_t\|^2)\leq& C^2 \left(1+(1+\delta)\frac{D-\kappa}{SD}G^2
  +\frac{\sigma^2}{\left(\sum_{i=1}^U K_i\beta_{i,t}b_{t}\right)^{2}}\right)\nonumber\\ \label{eq:Lemma1}&+ \sum_{i=1}^U\beta_{i,t}(1+\delta)\frac{D-\kappa}{D}G^2,
\end{align}
where $0<\delta<1$ is the constant in the RIP condition, $C=\frac{2\varpi}{1-\varrho}$, $\varpi=\frac{2\sqrt{1+\delta}}{\sqrt{1-\delta}}$ and $\varrho=\frac{\sqrt{2}\delta}{1-\delta}$.
%, $G$ and $\varepsilon_t$ positive constants.
\end{lemma}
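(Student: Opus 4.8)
The plan is to decompose the total error $\mathbf{e}_t = \hat{\mathbf{g}}_t - \mathbf{g}_t$ into a chain of intermediate errors corresponding to each processing stage, and to bound each piece separately before reassembling via the triangle inequality. Concretely, I would write $\hat{\mathbf{g}}_t - \mathbf{g}_t = (\hat{\mathbf{g}}_t - \mathbf{g}^{sparse}_t) + (\mathbf{g}^{sparse}_t - \mathbf{g}_t)$, where $\mathbf{g}^{sparse}_t$ is the exact average of the sparsified local gradients $\tilde{\mathbf{g}}_{i,t}$. The second term is the pure sparsification error and should be handled first: using the top-$\kappa$ property, for each worker $\|\tilde{\mathbf{g}}_{i,t} - \mathbf{g}_{i,t}\|^2 \leq \frac{D-\kappa}{D}\|\mathbf{g}_{i,t}\|^2 \leq \frac{D-\kappa}{D}G^2$ by Assumption 4; after averaging (and accounting for the scheduling indicators $\beta_{i,t}$), this produces the additive term $\sum_{i=1}^U \beta_{i,t}(1+\delta)\frac{D-\kappa}{D}G^2$, where the $(1+\delta)$ factor presumably enters because the relevant quantity is measured through $\bm{\Phi}$ and the RIP lower/upper bounds are invoked.

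Next I would bound $\hat{\mathbf{g}}_t - \mathbf{g}^{sparse}_t$, the error of the 1-bit CS reconstruction pipeline. The key is the stability guarantee of the 1-bit CS decoder $\mathcal{C}^{-1}(\cdot)$: for a $\bar\kappa$-sparse signal, a BIHT-type reconstruction from noisy/perturbed sign measurements satisfies a bound of the form $\|\mathcal{C}^{-1}(\hat{\mathbf{y}}) - \mathbf{x}\| \leq C(\text{something})$, with the reconstruction constant $C = \frac{2\varpi}{1-\varrho}$ arising from a contraction/fixed-point argument with $\varpi = \frac{2\sqrt{1+\delta}}{\sqrt{1-\delta}}$ and $\varrho = \frac{\sqrt{2}\delta}{1-\delta}$ (this is where the stated constants come from — I would cite \cite{jacques2013robust} for the BIHT robustness bound). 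The perturbation in the measurements has two sources: the post-processing noise term $(\sum_i K_i\beta_{i,t}b_t)^{-1}\mathbf{z}_t$ from \eqref{eq:gt}, contributing $\frac{\sigma^2}{(\sum_i K_i\beta_{i,t}b_t)^2}$, and the mismatch between $\texttt{sign}(\bm\Phi\mathbf{g}^{sparse}_t)$ and the averaged sign vectors actually transmitted, which is again controlled by the sparsification discrepancy through $\bm\Phi$ and the RIP constants, contributing the $(1+\delta)\frac{D-\kappa}{SD}G^2$ term (the extra $1/S$ reflecting the dimension-reduced domain). The "$1$" inside the parenthesis is the baseline normalization — essentially $\|\mathbf{g}^{sparse}_t\|^2$ or the unit-norm reference that 1-bit CS recovery is defined against.

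Finally I would combine the pieces: apply $\|\mathbf{a}+\mathbf{b}\|^2 \leq 2\|\mathbf{a}\|^2 + 2\|\mathbf{b}\|^2$ (or a weighted version absorbed into constants), take expectation over the Gaussian noise $\mathbf{z}_t$ and possibly over the random matrix $\bm\Phi$, and collect terms to match \eqref{eq:Lemma1}, folding numerical constants into $C^2$. The main obstacle I anticipate is making the 1-bit CS stability bound rigorous in this setting: the standard BIHT guarantees are stated for a \emph{fixed} sparse vector measured by a \emph{single} Gaussian matrix with sign outputs, whereas here the PS decodes $\hat{\mathbf{y}}^{desired}_t$, which is a \emph{weighted average of sign vectors plus noise} — not itself a clean sign vector of any sparse signal. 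Bridging this gap (e.g., arguing that the averaged-and-noisy measurement is close in the appropriate sense to $\texttt{sign}(\bm\Phi\mathbf{g}_t)$ up to a controllable consistency violation, and that the decoder's robustness absorbs it) is the delicate step; the sparsification and noise bounds themselves are routine once that framework is in place.
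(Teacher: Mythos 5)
Your proposal follows essentially the same route as the paper's proof: the same decomposition into sparsification error plus reconstruction error, the same per-worker top-$\kappa$/Assumption-4 bound yielding $\sum_{i}\beta_{i,t}(1+\delta)\frac{D-\kappa}{D}G^2$, and the same strategy of treating the quantization mismatch and post-processed noise as a bounded measurement perturbation $\varepsilon_t$ absorbed by a stable-recovery constant $C=\frac{2\varpi}{1-\varrho}$ (the paper invokes the Cand\`es $\ell_1$ noisy-recovery theorem rather than a BIHT-specific guarantee, and the ``$1$'' in the bracket arises as $\|\texttt{sign}(\cdot)\|^2/S = S/S$ rather than a unit-norm normalization). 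The delicate step you flag --- that the decoder acts on an averaged, noisy sign vector rather than a clean sign vector of a sparse signal --- is precisely the step the paper handles by bounding $\mathbb{E}\|\hat{\mathbf{y}}^{desired}_t-\bm{\Phi}\tilde{\mathbf{g}}_t\|^2\leq\varepsilon_t$ and applying $\|\hat{\mathbf{g}}_t-\tilde{\mathbf{g}}_t\|^2\leq \frac{C^2}{S}\varepsilon_t$.
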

\begin{proof}
The proof of \textbf{Lemma \ref{Lemma1}} is provide in Appendix \ref{Appendix A}.
\end{proof}
\begin{remark}
\textbf{Lemma \ref{Lemma1}} indicates that a larger $\kappa$ leads to a smaller error, which suggests that sparsification is applied at the expense of accuracy. And a larger $S$ leads to a smaller error because of less compression.
\end{remark}

Next we present the main theorem for the expected convergence rate of the 1-bit CS based FL over the air with analog aggregation, as in \textbf{Theorem \ref{Theorem1}}.

%According to \cite{konevcny2016federated, yuan2016convergence}, the FL algorithm applied over the ideal wireless channels is able to solve \textbf{P1} and converges to an optimal $\mathbf{w}^*$.

%In the presence of wireless transmission errors, we derive the expected convergence rate of the FL over the air with analog aggregation, as in \textbf{Theorem \ref{Theorem1}}.

\begin{theorem}\label{Theorem1}
Given the power scaling factor $b_{t}$, worker selection vectors $\beta_{i,t}$, and the learning rate $\alpha = \frac{1}{L}$, we have the following convergence rate at the $T$-th iteration.  %the expected performance gap $\mathbb{E}[F(\mathbf{w}_{t})-F(\mathbf{w}^*)]$ for $\mathbf{w}_{t}$ in \eqref{eq:gt} at the $t$-th iteration is upper bounded by
\begin{align}\label{eq:Theorem1}
\frac{1}{T}\sum_{t=1}^{T}\parallel\nabla F(\mathbf{w}_{t-1})\parallel^2
\leq& \frac{2L}{T(1-\rho_2)}\mathbb{E}[F(\mathbf{w}_0)-F(\mathbf{w}^*)] +\frac{2L}{T(1-\rho_2)}\sum_{t=1}^{T}B_t,
\end{align}
where
\begin{align}\label{eq:BtTheo}
B_{t}=&\frac{\sum_{i=1}^U K_i\rho_1(1-\beta_{i,t})}{2LK}+\frac{C^2 }{2L} \left(1+(1+\delta)\frac{D-\kappa}{SD}G^2
  +\frac{\sigma^2}{\left(\sum_{i=1}^U K_i\beta_{i,t}b_{t}\right)^{2}}\right)\nonumber\\& + \sum_{i=1}^U\beta_{i,t}(1+\delta)\frac{D-\kappa}{2LD}G^2,
\end{align}
%\begin{align}\label{eq:AtTheo}
%A_{t}=\frac{1}{2L}-\frac{\sum_{i=1}^U K_i\rho_2(1-\beta_{i,t})}{2LK},
%\end{align}
%\begin{align}\label{eq:BtTheo}
%B_{t}=&\frac{\sum_{i=1}^U K_i\rho_1(1-\beta_{i,t})}{2LK}+\frac{C^2 }{2L} \left(1+(1+\delta)\frac{D-\kappa}{SD}G^2
%  +\frac{\sigma^2}{\left(\sum_{i=1}^U K_i\beta_{i,t}b_{t}\right)^{2}}\right)\nonumber\\& + \sum_{i=1}^U\beta_{i,t}(1+\delta)\frac{D-\kappa}{2LD}G^2,
%\end{align}
and $\mathbf{w}_t$ converges to $\mathbf{w}^*$.
\end{theorem}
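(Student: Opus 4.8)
The plan is to follow the standard descent-lemma argument for $L$-smooth functions, but with the exact gradient replaced by the recovered gradient $\hat{\mathbf{g}}_t = \mathbf{g}_t + \mathbf{e}_t$, and then to absorb all error contributions into the term $B_t$ via \textbf{Lemma \ref{Lemma1}} and \textbf{Assumption 3}. First I would invoke \textbf{Assumption 2} (equivalently the $L$-smoothness from \textbf{Assumption 1}) to write the quadratic upper bound
\begin{align}
F(\mathbf{w}_{t+1}) \leq F(\mathbf{w}_t) + \nabla F(\mathbf{w}_t)^T(\mathbf{w}_{t+1}-\mathbf{w}_t) + \frac{L}{2}\|\mathbf{w}_{t+1}-\mathbf{w}_t\|^2,
\end{align}
and then substitute the update rule \eqref{eq:sharedupdate1}, i.e. $\mathbf{w}_{t+1}-\mathbf{w}_t = -\alpha\hat{\mathbf{g}}_t = -\frac{1}{L}(\mathbf{g}_t+\mathbf{e}_t)$. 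Expanding with $\alpha = 1/L$ gives, after collecting terms, something of the form $F(\mathbf{w}_{t+1}) \leq F(\mathbf{w}_t) - \frac{1}{2L}\|\nabla F(\mathbf{w}_t)\|^2 + (\text{cross terms and }\|\mathbf{e}_t\|^2\text{ terms})$, where I would use $-\nabla F(\mathbf{w}_t)^T\mathbf{g}_t$ together with the standard identity relating it to $\|\nabla F\|^2$, $\|\mathbf{g}_t\|^2$ and $\|\mathbf{g}_t-\nabla F(\mathbf{w}_t)\|^2$.

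The second step is to control the discrepancy between the \emph{full-participation} averaged gradient $\frac{1}{K}\sum_i K_i \mathbf{g}_{i,t}$ (which equals $\nabla F(\mathbf{w}_t)$ in the GD setting) and the \emph{actual} averaged local gradient $\mathbf{g}_t$ formed only over scheduled workers. The unscheduled workers ($\beta_{i,t}=0$) contribute a bias term; bounding its norm using \textbf{Assumption 3} (sample-wise gradients bounded by $\rho_1 + \rho_2\|\nabla F(\mathbf{w}_t)\|^2$) is what produces the $\frac{\sum_i K_i \rho_1(1-\beta_{i,t})}{2LK}$ term in \eqref{eq:BtTheo} and the $\rho_2$-dependent coefficient $\frac{1}{1-\rho_2}$ in front of everything in \eqref{eq:Theorem1}. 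I would take expectations over the reconstruction/noise randomness, plug in the bound on $\mathbb{E}\|\mathbf{e}_t\|^2$ from \textbf{Lemma \ref{Lemma1}} (which supplies the $C^2(\cdot)$ block and the $\sum_i \beta_{i,t}(1+\delta)\frac{D-\kappa}{D}G^2$ term, each divided by $2L$), and identify the sum of all non-$\|\nabla F\|^2$ pieces as exactly $B_t$.

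The final step is the telescoping: rearranging the per-iteration inequality into
\begin{align}
\frac{(1-\rho_2)}{2L}\|\nabla F(\mathbf{w}_t)\|^2 \leq F(\mathbf{w}_t) - \mathbb{E}[F(\mathbf{w}_{t+1})] + B_t,
\end{align}
summing from $t=0$ to $T-1$, using $F(\mathbf{w}_T) \geq F(\mathbf{w}^*)$ so the telescoped difference is at most $\mathbb{E}[F(\mathbf{w}_0)-F(\mathbf{w}^*)]$, dividing by $T$ and multiplying by $\frac{2L}{1-\rho_2}$, and re-indexing ($t-1$ instead of $t$) to match \eqref{eq:Theorem1}. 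Convergence of $\mathbf{w}_t$ to $\mathbf{w}^*$ then follows once the bound shows the averaged squared gradient norm can be driven toward the residual error floor $\frac{2L}{(1-\rho_2)}\cdot\frac{1}{T}\sum_t B_t$.

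The main obstacle I expect is the bookkeeping in step two: getting the $\rho_2$ terms to factor cleanly so that the coefficient $(1-\rho_2)$ emerges in front of $\|\nabla F(\mathbf{w}_t)\|^2$ requires carefully handling the cross term $\nabla F(\mathbf{w}_t)^T(\mathbf{g}_t - \nabla F(\mathbf{w}_t))$ and the sample-wise bound simultaneously — in particular one must ensure the $\rho_2\|\nabla F(\mathbf{w}_t)\|^2$ piece coming from \textbf{Assumption 3} is moved to the left-hand side rather than being absorbed into $B_t$, and that the learning-rate choice $\alpha = 1/L$ is exactly what makes the $\frac{L}{2}\alpha^2$ curvature term combine with the linear term to leave the clean $-\frac{1}{2L}\|\nabla F\|^2$ coefficient. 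The noise and reconstruction terms, by contrast, pass through transparently once \textbf{Lemma \ref{Lemma1}} is in hand.
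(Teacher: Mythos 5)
Your proposal is correct and follows essentially the same route as the paper's proof: the $L$-smoothness descent lemma with $\alpha=1/L$, completion of the square to isolate $-\frac{1}{2L}\|\nabla F(\mathbf{w}_{t-1})\|^2$ plus an error term $\frac{1}{2L}\mathbb{E}\|\mathbf{o}\|^2$ with $\mathbf{o}=\nabla F(\mathbf{w}_{t-1})-\hat{\mathbf{g}}_t$, a bound on that error via the scheduling bias (Assumption 3, which supplies the $\rho_1$ piece of $B_t$ and the $\rho_2$ piece that is moved to the left to yield the $\frac{1}{1-\rho_2}$ factor) plus Lemma 1, and a final telescoping with $F(\mathbf{w}_T)\geq F(\mathbf{w}^*)$. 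The only cosmetic difference is that you expand through $\hat{\mathbf{g}}_t=\mathbf{g}_t+\mathbf{e}_t$ and the polarization identity, whereas the paper completes the square directly in $\mathbf{o}$; both lead to the same per-iteration inequality.
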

\begin{proof}
The proof of \textbf{Theorem \ref{Theorem1}} is provide in Appendix \ref{Appendix B}.
\end{proof}

 In \textbf{Theorem \ref{Theorem1}}, the expected gradient norm is used as an indicator of convergence \cite{wang2018cooperative}. That is, the FL algorithm achieves an $\tau$-suboptimal solution if: %(Lian et al., 2015; Zeng & Yin, 2016; Bottou et al., 2018)
\begin{align}\label{eq:indicator}
\frac{1}{T}\sum_{t=1}^{T}\frac{}{}\parallel\nabla F(\mathbf{w}_{t-1})\parallel^2
\leq& \tau,
\end{align}
which guarantees the convergence of the algorithm to a stationary point. If the objective function $F(\mathbf{w})$ is non-convex, then FL may converge to a local minimum or saddle point.

From \textbf{Theorem \ref{Theorem1}}, we have
%\begin{align}
%\frac{1}{T}\sum_{t=1}^{T}\frac{}{}\parallel\nabla F(\mathbf{w}_{t-1})\parallel^2
%\leq& \frac{2L}{T(1-\rho_2)}\mathbb{E}[F(\mathbf{w}_0)-F(\mathbf{w}^*)] +\frac{2L}{T(1-\rho_2)}\sum_{t=1}^{T}B_t \nonumber\\ \label{eq:errorfloor}
%\overset{T\rightarrow\infty}{\longrightarrow}
%& \frac{2L}{T(1-\rho_2)}\sum_{t=1}^{T}B_t.
%\end{align}
\begin{align}
\frac{1}{T}\sum_{t=1}^{T}\frac{}{}\parallel\nabla F(\mathbf{w}_{t-1})\parallel^2
\leq& \frac{2L}{T(1-\rho_2)}\mathbb{E}[F(\mathbf{w}_0)-F(\mathbf{w}^*)] +\frac{2L}{T(1-\rho_2)}\sum_{t=1}^{T}B_t \nonumber\\ \label{eq:errorfloor}
\overset{T\rightarrow\infty}{\longrightarrow}
& \frac{2L}{T(1-\rho_2)}\sum_{t=1}^{T}B_t.
\end{align}

The error floor at convergence is given by \eqref{eq:errorfloor}. Obviously, minimizing this error floor can improve the convergence performance of FL. Capitalizing on this theoretical result, we provide a joint optimization of communication and computation next.

%Therefore, it is necessary to optimize the transmission process to minimize $B_t$,
%\begin{align}\label{eq:BtTheoop}
%B_{t}&=\frac{\sum_{i=1}^U K_i\rho_1(1-\beta_{i,t})}{2LK}+\frac{1}{2L}(C^2\frac{C^2}{S} (S+(1+\delta)\frac{D-\kappa}{D}G^2
%  +(\sum_{i=1}^U K_i\beta_{i,t}b_{t})^{-2}S\sigma^2) + \sum_{i=1}^U\beta_{i,t}(1+\delta)\frac{D-\kappa}{D}G^2).
%\end{align}
\section{Minimization of the Error Floor for Federated Learning Algorithm}\label{sec:Joint optimization}
In this section, we formulate a joint optimization problem to minimize the error floor in \eqref{eq:errorfloor} for 1-bit CS based FL over the air. In solving such a problem, we first develop an optimal solution via discrete programming, and then propose a computationally scalable ADMM-based suboptimal solution for large-scale wireless networks.%ADMM-based suboptimal version for applicability in large scale environment.

\subsection{Joint Optimization Problem Formulation}
In the deployment of FL over the air, the error floor in \eqref{eq:errorfloor} is accumulated over iterations, resulting a performance gap between $F(\mathbf{w}_{t-1})$ and $F(\mathbf{w}^*)$. Thus, we design an online policy to minimize this gap at each iteration, which amounts to iteratively minimizing $B_t$ under the constraint of transmit power limitation in \eqref{power_limitation}. Minimizing $B_t$ is equivalent to minimizing $R_{t}=2L B_t$, i.e.,
\begin{align}
R_{t}=&\frac{\sum_{i=1}^U K_i\rho_1(1-\beta_{i,t})}{K}+C^2 \left(1+(1+\delta)\frac{D-\kappa}{DS}G^2
  +(\sum_{i=1}^U K_i\beta_{i,t}b_{t})^{-2}\sigma^2\right) \nonumber\\\label{eq:RtTheoop}&+ \sum_{i=1}^U\beta_{i,t}(1+\delta)\frac{D-\kappa}{D}G^2.
\end{align}

At each iteration $t$, the PS aims to determine the power scaling factor $b_t$ and the scheduling indicator $\bm{\beta}_{t}=[\beta_{1,t},\beta_{2,t},...,\beta_{U,t}]$ in order to minimize $R_t$, for given values of the factors (i.e., $C$, $S$, and $\kappa$) related to 1-bit CS.  Such a joint optimization problem is formulated as
%For given values of the factors (i.e., $C$, $S$, and $\kappa$) related to 1-bit CS, the joint optimization problem carried out at the PS during the $t$-th iteration to determine the power scaling factor $b_t$ and the scheduling indicator $\bm{\beta}_{t}$ is formulated as
%\begin{small}\setlength{\belowdisplayskip}{-15.0pt}%\setlength{\abovedisplayskip}{-0.0pt}
\begin{subequations}
\begin{align}\label{IterationOpt}
\textbf{P2:}\quad \min_{b_{t},\bm{\beta}_{t}} \quad & R_{t}\\ \label{con:pmax}
\text{s.t.} \quad& \  \frac{\beta^2_{i,t}K^2_ib^2_{t}}{h^2_{i,t}}\leq P_i^{\text{Max}}, \\
\quad& \ {\beta}_{i,t}\in\{0,1\}, i\in \{1,2,...,U\}.
\end{align}
\end{subequations}
%\end{small}
%where $\bm{\beta}_{t}=[\beta_{1,t},\beta_{2,t},...,\beta_{U,t}]$.
\subsection{Optimal Solution via Discrete Programming}
As a mixed integer programming (MIP), \textbf{P2} is non-convex and challenging to solve due to the coupling of the power scaling factor $b_t$ and the scheduling indicator $\bm{\beta}_{t}$.
%Due to the coupling of the power scaling factor $b_t$ and the scheduling indicator $\bm{\beta}_{t}$, \textbf{P2} is a mixed integer programming (MIP) which is non-convex and challenging to solve.  leading to a mixed integer programming (MIP).
Note that once $\bm{\beta}_{t}$ is given, the problem \textbf{P2} reduces to a convex problem, where the optimal power scaling $b_{t}$ can be efficiently solved using off-the-shelf optimization algorithms, e.g., interior point method \cite{boyd2004convex}. Accordingly, a straightforward method is to enumerate all the $2^{U}$ possibilities of $\bm{\beta}_{t}$ and output the one that yields the lowest objective value. This enumeration-based method is summarized in \textbf{Algorithm \ref{alg:enumeration}}.
\begin{algorithm}[htb]
	\caption{Optimal solution via the enumeration-based method}
	\label{alg:enumeration}
	\begin{algorithmic}[1]
\renewcommand{\algorithmicrequire}{\textbf{Initialization:}}
		\REQUIRE ~~\\
		$\{P_{i}^{\text{Max}}, h_{i,t}, K_i\}_{i=1}^U$, $\bm{\Phi}$, $G$, $\kappa$.\\
		\ENSURE ~~\\
		The optimal solution $\{b_{t}^*, \bm{\beta}_{t}^*\}$.
\STATE \textbf{Repeat}
\STATE Select $\bm{\beta}_{t}$ from its possibility;
\STATE Given $\bm{\beta}_{t}$, solve \textbf{P2} to find $\{b_{t}\}$;
\STATE If the objective value is lower under this $\{b_{t}, \bm{\beta}_{t}\}$, then update $\{b_{t}^*, \bm{\beta}_{t}^*\}$;
%\STATE Use a line search method to over \textbf{P3} in \eqref{P3eq} to find the optimal $\{b_{t}^*, \bm{\beta}^*_{t}\}$.
\STATE  \textbf{Until} \{all the possible of $\bm{\beta}_{t}$ are enumerated\}
\RETURN $\{b_{t}^*, \bm{\beta}_{t}^*\}$.
	\end{algorithmic}
\end{algorithm}
\begin{remark}\label{enumeration-based}
The enumeration-based method may be applicable for a small number of workers, e.g., $U\leq 10$; however, it quickly becomes computationally infeasible as $U$ increases. %Therefore, it will be mainly used as a benchmark to evaluate the performance of the proposed reduced-complexity algorithms in this paper.
\end{remark}

\subsection{ADMM-based Suboptimal Solution}
The enumeration-based method proposed in the last subsection is simple to implement, because the computation involves basic function evaluations only. However, large-scale networks with much increased searching dimensions makes it susceptible to high computational complexity.  To address the problem, we propose an ADMM-based algorithm to jointly optimize the local worker selection and power control. As we will show later, the proposed ADMM-based approach has a computational complexity that increases linearly with the network size $U$.

The main idea is to decompose the hard combinatorial optimization \textbf{P2} into $U$ parallel smaller integer programming problems. Nonetheless, conventional decomposition techniques, such as dual decomposition, cannot be directly applied to \textbf{P2} due to the coupled variables $\{b_{t}, \bm{\beta}_{t}\}$ and the constraint \eqref{con:pmax} among the workers. To eliminate these coupling factors, we first introduce an auxiliary vector $\mathbf{r}_{t}=[r_{1,t},r_{2,t},...,r_{U,t}]$ and define two auxiliary functions as
%\begin{small}\setlength{\belowdisplayskip}{-0.0pt}\setlength{\abovedisplayskip}{-0.0pt}
\begin{align}\label{eq:Q1}
Q_1(\mathbf{r}_{t})=C^2(\sum_{i=1}^U K_ir_{i,t})^{-2}\sigma^2,
%\frac{\sum_{i=1}^U K_i\rho_1(1-\beta_{i,t})}{K}+C^2 (S+(1+\delta)\frac{D-\kappa}{D}G^2
%  +(\sum_{i=1}^U K_i\beta_{i,t}b_{t})^{-2}S\sigma^2) + \sum_{i=1}^U\beta_{i,t}(1+\delta)\frac{D-\kappa}{D}G^2
\end{align}
%\end{small}
and
%\begin{small}\setlength{\belowdisplayskip}{-0.0pt}\setlength{\abovedisplayskip}{-0.0pt}
\begin{align}
Q_2(\bm{\beta}_{t})=&\frac{\sum_{i=1}^U K_i\rho_1(1-\beta_{i,t})}{K}+C^2 \left(1+(1+\delta)\frac{D-\kappa}{SD}G^2\right)\label{eq:Q2}
   + \sum_{i=1}^U\beta_{i,t}(1+\delta)\frac{D-\kappa}{D}G^2.
\end{align}
%\end{small}
%\begin{align}\label{eq:Q1}
%Q_1(\mathbf{r}_{t})=C^2(\sum_{i=1}^U K_ir_{i,t})^{-2}\sigma^2
%%\frac{\sum_{i=1}^U K_i\rho_1(1-\beta_{i,t})}{K}+C^2 (S+(1+\delta)\frac{D-\kappa}{D}G^2
%%  +(\sum_{i=1}^U K_i\beta_{i,t}b_{t})^{-2}S\sigma^2) + \sum_{i=1}^U\beta_{i,t}(1+\delta)\frac{D-\kappa}{D}G^2
%\end{align}
%and
%\begin{align}\label{eq:Q2}
%Q_2(\bm{\beta}_{t})=\frac{\sum_{i=1}^U K_i\rho_1(1-\beta_{i,t})}{K}+C^2 (1+(1+\delta)\frac{D-\kappa}{SD}G^2)
%   + \sum_{i=1}^U\beta_{i,t}(1+\delta)\frac{D-\kappa}{D}G^2.
%\end{align}

%\frac{\sum_{i=1}^U K_i\rho_1(1-\beta_{i,t})}{K}+C^2 (S+(1+\delta)\frac{D-\kappa}{D}G^2
%  +(\sum_{i=1}^U K_i\beta_{i,t}b_{t})^{-2}S\sigma^2) + \sum_{i=1}^U\beta_{i,t}(1+\delta)\frac{D-\kappa}{D}G^2  by introducing artificial variables $r_{i,t}$'s and $b_{i,t}$'s as follows

Then we introduce another auxiliary vector $\mathbf{q}_{t}=[q_{1,t},q_{2,t},...,q_{U,t}]$ and reformulate \textbf{P2} as the following \textbf{P3}.
%\begin{subequations}
%\begin{align}\label{IterationOpttightenP4}
%\textbf{P3:}\qquad \min_{b_{t},\{r_{i,t},q_{i,t},\beta_{i,t}\}_{i=1}^{U}} &\qquad Q_1(\mathbf{r}_{t})+Q_2(\bm{\beta}_{t})\\ \label{constraint:P3b} %\nonumber
%\text{s.t.} & \qquad  \bigg|\frac{K_ir_{i,t}}{h_{i,t}}\bigg|^2\leq P_i^{\text{Max}}, \\ \label{constraint:P3c}%\\ \nonumber
%& \qquad r_{i,t}=\beta_{i,t}q_{i,t},\\\label{constraint:P3d}
%&\qquad q_{i,t}=b_{t},\\
%& \qquad r_{i,t}>0, b_t>0, {\beta}_{i,t}\in\{0,1\}, i\in \{1,2,...,U\},
%\end{align}
%\end{subequations}
%where $\mathbf{r}_{t}=[r^1_{t},...,r^D_{t}]$. We denote
%\begin{small}%\setlength{\belowdisplayskip}{-0.0pt}\setlength{\abovedisplayskip}{-0.0pt}
\begin{subequations}
\begin{align}\label{IterationOpttightenP4}
\textbf{P3:}\qquad \min_{b_{t},\{r_{i,t},q_{i,t},\beta_{i,t}\}_{i=1}^{U}} &\qquad Q_1(\mathbf{r}_{t})+Q_2(\bm{\beta}_{t})\\ \label{constraint:P3b} %\nonumber
\text{s.t.} & \qquad  \bigg|\frac{K_ir_{i,t}}{h_{i,t}}\bigg|^2\leq P_i^{\text{Max}}, \\ \label{constraint:P3c}%\\ \nonumber
& \qquad r_{i,t}=\beta_{i,t}q_{i,t},\\\label{constraint:P3d}
&\qquad q_{i,t}=b_{t},\\
& \qquad r_{i,t}>0, b_t>0, \\&\qquad{\beta}_{i,t}\in\{0,1\},\\&\qquad i\in \{1,2,...,U\}.
\end{align}
\end{subequations}
%\end{small}
Here, the constraints \eqref{constraint:P3c} and \eqref{constraint:P3d} are introduced to decouple $\beta_{i,t}$ and $b_t$ while guaranteeing that \textbf{P3} and \textbf{P2} are equivalent.

 By introducing multipliers $\nu_{i,t}\geq 0$'s, $\xi_{i,t}\geq 0$'s and $\varsigma_{i,t}\geq 0$'s to the constraints in \eqref{constraint:P3b}, \eqref{constraint:P3c} and \eqref{constraint:P3d}, we can write a partial augmented Lagrangian of \textbf{P3} as
 \begin{align}
 \mathcal{L}(b_{t},\bm{\beta}_{t},\mathbf{r}_{t},\mathbf{q}_{t},\bm{\nu}_{t},\bm{\xi}_{t},\bm{\varsigma}_{t})=&Q_1(\mathbf{r}_{t})+Q_2(\bm{\beta}_{t})
+\sum_{i=1}^U\nu_{i,t}\bigg(\bigg|\frac{K_ir_{i,t}}{h_{i,t}}\bigg|^2- P_i^{\text{Max}}\bigg)\\ \nonumber
&+\sum_{i=1}^U\xi_{i,t}(r_{i,t}-\beta_{i,t}q_{i,t})
+\frac{c}{2}\sum_{i=1}^U(r_{i,t}-\beta_{i,t}q_{i,t})^2
\\ \nonumber
&+\sum_{i=1}^U\varsigma_{i,t}(q_{i,t}-b_t)+\frac{c}{2}\sum_{i=1}^U(q_{i,t}-b_t)^2,
 \end{align}
where $\bm{\nu}_{t}=[\nu_{1,t},\nu_{2,t},...,\nu_{U,t}]$, $\bm{\xi}_{t}=[\xi_{1,t},\xi_{2,t},...,\xi_{U,t}]$, $\bm{\varsigma}_{t}=[\varsigma_{1,t},\varsigma_{2,t},...,\varsigma_{U,t}]$, and $c>0$ is a fixed step size. The corresponding dual problem is
\begin{subequations}
\begin{align}\label{dualP}
\textbf{P4:}\quad \max_{\{\nu_{i,t},\xi_{i,t},\varsigma_{i,t}\}_{i=1}^U} \quad  &\mathcal{M}(\bm{\nu}_{t},\bm{\xi}_{t},\bm{\varsigma}_{t}) \\
\text{s.t.} \quad& \nu_{i,t}\geq 0,\ \xi_{i,t}\geq 0,\ \varsigma_{i,t}\geq 0,i\in \{1,2,...,U\},
\end{align}
\end{subequations}
where $\mathcal{M}(\bm{\nu}_{t},\bm{\xi}_{t},\bm{\varsigma}_{t})$ is the dual function, which is given by
\begin{subequations}
\begin{align}\label{IterationOpttightenP4}
 \mathcal{M}(\bm{\nu}_{t},\bm{\xi}_{t},\bm{\varsigma}_{t})=\min_{b_{t},\{r_{i,t},q_{i,t},\beta_{i,t}\}_{i=1}^{U}} \quad & \mathcal{L}(b_{t},\mathbf{r}_{t},\mathbf{q}_{t},\bm{\beta}_{t}) \\ \label{constraint:P5b} %\nonumber
\text{s.t.} \quad & \ r_{i,t}>0, b_t>0, q_{i,t}>0, \\ \quad &\beta_{i,t}\in\{0,1\}, i\in \{1,2,...,U\}.
\end{align}
\end{subequations}

The ADMM technique \cite{boyd2011distributed} solves the dual problem \textbf{P4} by iteratively updating $\{\mathbf{r}_{t},b_{t}\}$, $\{\mathbf{q}_{t},\bm{\beta}_{t}\}$, and $\{\bm{\nu}_{t},\bm{\xi}_{t},\bm{\varsigma}_{t}\}$. We denote the values at the $l$-th iteration as $\{\mathbf{r}_{t}^{\{l\}},b_{t}^{\{l\}}\}$, $\{\mathbf{q}_{t}^{\{l\}},\bm{\beta}_{t}^{\{l\}}\}$, and $\{\bm{\nu}_{t}^{\{l\}},\bm{\xi}_{t}^{\{l\}},\bm{\varsigma}_{t}^{\{l\}}\}$. Then, the update of the variables is sequentially performed at the $(l + 1)$-th iteration as follows:

\textbf{1) Step 1:} Given $\{\mathbf{q}_{t}^{\{l\}},\bm{\beta}_{t}^{\{l\}}\}$, and $\{\bm{\nu}_{t}^{\{l\}},\bm{\xi}_{t}^{\{l\}},\bm{\varsigma}_{t}^{\{l\}}\}$, we first minimize $\mathcal{L}$ with respect to $\{\mathbf{r}_{t},b_{t}\}$, where
\begin{align}\label{Iterationrbl}
 \{\mathbf{r}_{t}^{\{l+1\}},b_{t}^{\{l+1\}}\}=\arg \min_{\mathbf{r}_{t},b_{t}} \mathcal{L}(\mathbf{r}_{t},b_{t};\mathbf{p}_{t}^{\{l\}},\bm{\beta}_{t}^{\{l\}},\bm{\nu}_{t}^{\{l\}},\bm{\xi}_{t}^{\{l\}},\bm{\varsigma}_{t}^{\{l\}}).
\end{align}

%Notice that \eqref{Iterationrbl} can be decomposed into $D$ parallel subproblems. Each subproblem
 Notice that \eqref{Iterationrbl} is a strictly convex problem, which can be easily solved to obtain the optimal solution, e.g., by using the projected Newton’s method \cite{boyd2004convex}. Since the complexity of solving this problem in \eqref{Iterationrbl} does not scale with $U$ (i.e., $\mathcal{O}(1)$ complexity), thus the overall computational complexity of \textbf{Step 1} is $\mathcal{O}(1)$.

\textbf{2) Step 2:} Given $\{\mathbf{r}_{t}^{\{l+1\}},b_{t}^{\{l+1\}}\}$, and $\{\bm{\nu}_{t}^{\{l\}},\bm{\xi}_{t}^{\{l\}},\bm{\varsigma}_{t}^{\{l\}}\}$, we then minimize $\mathcal{L}$ with respect to $\{\mathbf{q}_{t},\bm{\beta}_{t}\}$, where
\begin{align}\label{Iterationrbibetal}
 \{\mathbf{q}_{t}^{\{l+1\}},\bm{\beta}_{t}^{\{l+1\}}\}=\arg \min_{\mathbf{q}_{t},\bm{\beta}_{t}} \mathcal{L}(\mathbf{q}_{t},\bm{\beta}_{t};\mathbf{r}_{t}^{\{l+1\}},b_{t}^{\{l+1\}},\bm{\nu}_{t}^{\{l\}},\bm{\xi}_{t}^{\{l\}},\bm{\varsigma}_{t}^{\{l\}}).
\end{align}

This optimization can be decomposed into $U$ parallel subproblems. In each subproblem (e.g., $i$-th subproblem), by considering $\beta_{i,t}=0$ and $\beta_{i,t}=1$, respectively, the $i$-th subproblem is expressed as
\begin{equation}\label{Iterationrbibetal01}
 \{q_{i,t}\}^{\{l+1\}}=
 \left\{
\begin{aligned}
\arg \min_{q_{i,t}} \mathcal{L}\left(q_{i,t},0;\{\mathbf{r}_{t}\}^{\{l+1\}},\{b_{t}\}^{\{l+1\}},\{\nu_{i,t}\}^{\{l\}},\{\xi_{i,t}\}^{\{l\}},\{\varsigma_{i,t}\}^{\{l\}}\right) & , & \beta_{i,t}=0, \\
\arg \min_{q_{i,t}} \mathcal{L}\left(q_{i,t},1;\{\mathbf{r}_{t}\}^{\{l+1\}},\{b_{t}\}^{\{l+1\}},\{\nu_{i,t}\}^{\{l\}},\{\xi_{i,t}\}^{\{l\}},\{\varsigma_{i,t}\}^{\{l\}}\right) & , & \beta_{i,t}=1.
\end{aligned}
\right.
\end{equation}
where
\begin{align}
\mathcal{L}&\left(q_{i,t},0;\{\mathbf{r}_{t}\}^{\{l+1\}},\{b_{t}\}^{\{l+1\}},\{\nu_{i,t}\}^{\{l\}},\{\xi_{i,t}\}^{\{l\}},\{\varsigma_{i,t}\}^{\{l\}}\right)\\ \nonumber =&\frac{K_i\rho_1}{K}
+\{\xi_{i,t}\}^{\{l\}}\{r_{i,t}\}^{\{l+1\}}
+\frac{c}{2}\left(\{r_{i,t}\}^{\{l+1\}}\right)^2
+\varsigma_{i,t}\left(q_{i,t}-\{b_{t}\}^{\{l+1\}}\right)+\frac{c}{2}\left(q_{i,t}-\{b_{t}\}^{\{l+1\}}\right)^2,
\end{align}
and
\begin{align}
\mathcal{L}&\left(q_{i,t},1;\{\mathbf{r}_{t}\}^{\{l+1\}},\{b_{t}\}^{\{l+1\}},\{\nu_{i,t}\}^{\{l\}},\{\xi_{i,t}\}^{\{l\}},\{\varsigma_{i,t}\}^{\{l\}}\right)\nonumber\\ \nonumber =&(1+\delta)\frac{D-\kappa}{D}G^2+
\{\xi_{i,t}\}^{\{l\}}\left(\{r_{i,t}\}^{\{l+1\}}-q_{i,t}\right)
+\frac{c}{2}\left(\{r_{i,t}\}^{\{l+1\}}-q_{i,t}\right)^2 \nonumber\\
&+\varsigma_{i,t}\left(q_{i,t}-\{b_{t}\}^{\{l+1\}}\right)+\frac{c}{2}\left(q_{i,t}-\{b_{t}\}^{\{l+1\}}\right)^2.
\end{align}

 For both $\beta_{i,t}=0$ and $\beta_{i,t}=1$, \eqref{Iterationrbibetal01} solves a strictly convex problem, and hence is easy to obtain the optimal solution. Accordingly, we can simply select between $\beta_{i,t}=0$ or $\beta_{i,t}=1$ that yields a smaller objective value in \eqref{Iterationrbibetal01} as $\{\beta_{i,t}\}^{\{l+1\}}$, and the corresponding optimal solution of $\{q_{i,t}\}^{\{l+1\}}$.  After solving the $U$ parallel subproblems, the optimal solution to \eqref{Iterationrbibetal} is given by $\{\mathbf{q}_{t}^{\{l+1\}},\bm{\beta}_{t}^{\{l+1\}}\}$. Notice that the complexity of solving each subproblem in \eqref{Iterationrbibetal} scales with $U$, and thus the overall computational complexity of \textbf{Step 2} is $\mathcal{O}(U)$.

 \textbf{3) Step 3:} Finally, given $\{\mathbf{r}_{t}^{\{l+1\}},b_{t}^{\{l+1\}}\}$ and $\{\mathbf{q}_{t}^{\{l+1\}},\bm{\beta}_{t}^{\{l+1\}}\}$, we maximize $\mathcal{L}$ with respect to $\{\bm{\nu}_t,\bm{\xi}_t,\bm{\varsigma}_t\}$, which is achieved by updating the multipliers as follows
 \begin{align}\label{nu}
\{\nu_{i,t}\}^{\{l+1\}}= \{\nu_{i,t}\}^{\{l\}}+c\left(\bigg|\frac{K_i\{r_{i,t}\}^{\{l+1\}}}{h_{i,t}}\bigg|^2- P_i^{\text{Max}}\right), \quad i=1,...,U,
\end{align}
 \begin{align}\label{xi}
\{\xi_{i,t}\}^{\{l+1\}}= \{\xi_{i,t}\}^{\{l\}}+c\left(\{r_{i,t}\}^{\{l+1\}}-\{\beta_{i,t}\}^{\{l+1\}}\{q_{i,t}\}^{\{l+1\}}\right), \quad i=1,...,U,%\quad d=1,...,D
\end{align}
 \begin{align}\label{varsigma}
\{\varsigma_{i,t}\}^{\{l+1\}}= \{\varsigma_{i,t}\}^{\{l\}}+c\left(\{q_{i,t}\}^{\{l+1\}}-\{b_t\}^{\{l+1\}}\right), \quad i=1,...,U.%\quad d=1,...,D
\end{align}

Obviously, the computational complexity of \textbf{Step 3} is $\mathcal{O}(U)$ as well.

The ADMM method implements the above \textbf{Steps 1} to \textbf{3} iteratively until meeting a specified stopping criterion. In general, the stopping criterion is specified by two thresholds \cite{boyd2011distributed}: an absolute tolerance (e.g., $\sum_{i=1}^U |\{q_{i,t}\}^{\{l+1\}}-\{b_{t}\}^{\{l+1\}}|$) and a relative tolerance (e.g., $ |\{b_{t}\}^{\{l+1\}}-\{b_{t}\}^{\{l\}}|$). The pseudo-code of the ADMM based method solving (\textbf{P3}) is summarized in \textbf{Algorithm \ref{alg:ADMM}}.
\begin{algorithm}[htb]
	\caption{ADMM-based suboptimal solution}
	\label{alg:ADMM}
	\begin{algorithmic}[1]
\renewcommand{\algorithmicrequire}{\textbf{Initialization:}}
		\REQUIRE ~~\\
		$\{P_{i}^{\text{Max}}, h_{i,t}, K_i\}_{i=1}^U$, $\bm{\Phi}$, $G$, $\kappa$.\\
		\ENSURE ~~\\
		The optimal solution $\{b_{t}^*, \bm{\beta}^*_{t}\}$.
\STATE \textbf{Repeat}
\STATE Update $\{\mathbf{r}_{t}^{\{l+1\}},b_{t}^{\{l+1\}}\}$ by solving \eqref{Iterationrbl};
\STATE Update $\{\mathbf{q}_{t}^{\{l+1\}},\bm{\beta}_{t}^{\{l+1\}}\}$ by solving \eqref{Iterationrbibetal};
\STATE Update $\{\bm{\nu}_t^{\{l+1\}},\bm{\xi}_t^{\{l+1\}},\bm{\varsigma}_t^{\{l+1\}}\}$ by using \eqref{nu}, \eqref{xi}, and\eqref{varsigma};
\STATE  \textbf{Until} \{the convergence threshold is satisfied or the maximum number of iterations is reached\}.
\RETURN $\{b_{t}^*, \bm{\beta}^*_{t}\}$.
\end{algorithmic}
\end{algorithm}

\begin{remark}\label{remark:convergence}
The proposed \textbf{Algorithm \ref{alg:ADMM}} is guaranteed to converge, because the dual problem \textbf{P4} is convex. Its convergence is insensitive to the step size $c$ \cite{ghadimi2014optimal}. Due to the potential duality gap of non-convex problems, \textbf{Algorithm \ref{alg:ADMM}} may not exactly converge to the primal optimal solution to \textbf{P3},  Thus, the dual optimal solution $\{b_{t}^*, \bm{\beta}^*_{t}\}$ is an approximate solution to \textbf{P3}. %As a result, it is a suboptimal solution to P3 and P2.%, whose performance gap will be evaluated through simulations.%upon termination of the algorithm,
\end{remark}

\begin{remark}\label{remark:complexity}
We deduce that the computational complexity of one ADMM iteration (including the 3 steps) is $\mathcal{O}(U)$, because the highest complexity of these three steps is $\mathcal{O}(U)$. This complexity $\mathcal{O}(U)$ is less sensitive to $U$ than the complexity $\mathcal{O}(2^{U})$ in the enumeration-based method.
\end{remark}

\section{Simulation Results And Evaluation} \label{Sec:Numerical Results}
In the simulations, we evaluate the performance of the proposed 1-bit CS based FL over the air for an image classification task. The simulation settings are given as follows unless specified otherwise.
We consider that the FL system has $U=10$ workers, and set their maximum peak power to be $P^{\texttt{Max}}_i=P^{\texttt{Max}}= 10$ mW for any $i \in [1, U]$. The wireless channels between the workers and the PS are modeled as i.i.d. Rayleigh fading, by generating $h_{i,t}$'s from an normal distribution $\mathcal{N}(0, 1)$ for different $i$ and $t$. Without loss of the generality, the variance of AWGN at PS is set to be $\sigma^2=10^{-4}$ mW, i.e., $SNR=\frac{P^{\text{Max}}}{\sigma^2}=5$ dB. We perform top $\kappa=10$ sparsification, and the dimension of compressed local $\mathcal{C}(\mathbf{g}_i)$'s is set to $S=1000$. The elements of the measurement matrix $\bm{\Phi}$ are generated from $\mathcal{N}(0, 1/S)$. The BIHT algorithm in \cite{jacques2013robust} is selected for the signal reconstruction at the PS.

We consider the learning task of handwritten-digit recognition using the well-known MNIST dataset\footnote{http://yann.lecun.com/exdb/mnist/} that consists of 10 classes ranging from digit ``0" to ``9". In the MNIST dataset, a total of 60000 labeled training data samples and 10000 test samples are available for training a learning model. In our experiments, we train a multilayer perceptron (MLP) with a 784-neuron input layer, a 64-neuron hidden layer, and a 10-neuron softmax output layer. We adopt cross entropy as the loss function, and rectified linear unit (ReLU) as the activation function. The total number of parameters in the MLP is $D=50890$. The learning rate $\alpha$ is set as 0.1. We randomly select $3000$ distinct training samples and distribute them to all local workers as their different local datasets, i.e., $K_i=\bar{K}=3000$, for any $i \in [1, U]$.%\forall i$.

For performance evaluation, we provide the results of training loss and test accuracy versus communication rounds under different parameter settings as follows. %we run our  different

\begin{figure}[tb]
  \centering
  \subfigure[Training loss]{\includegraphics[width=0.45\textwidth]{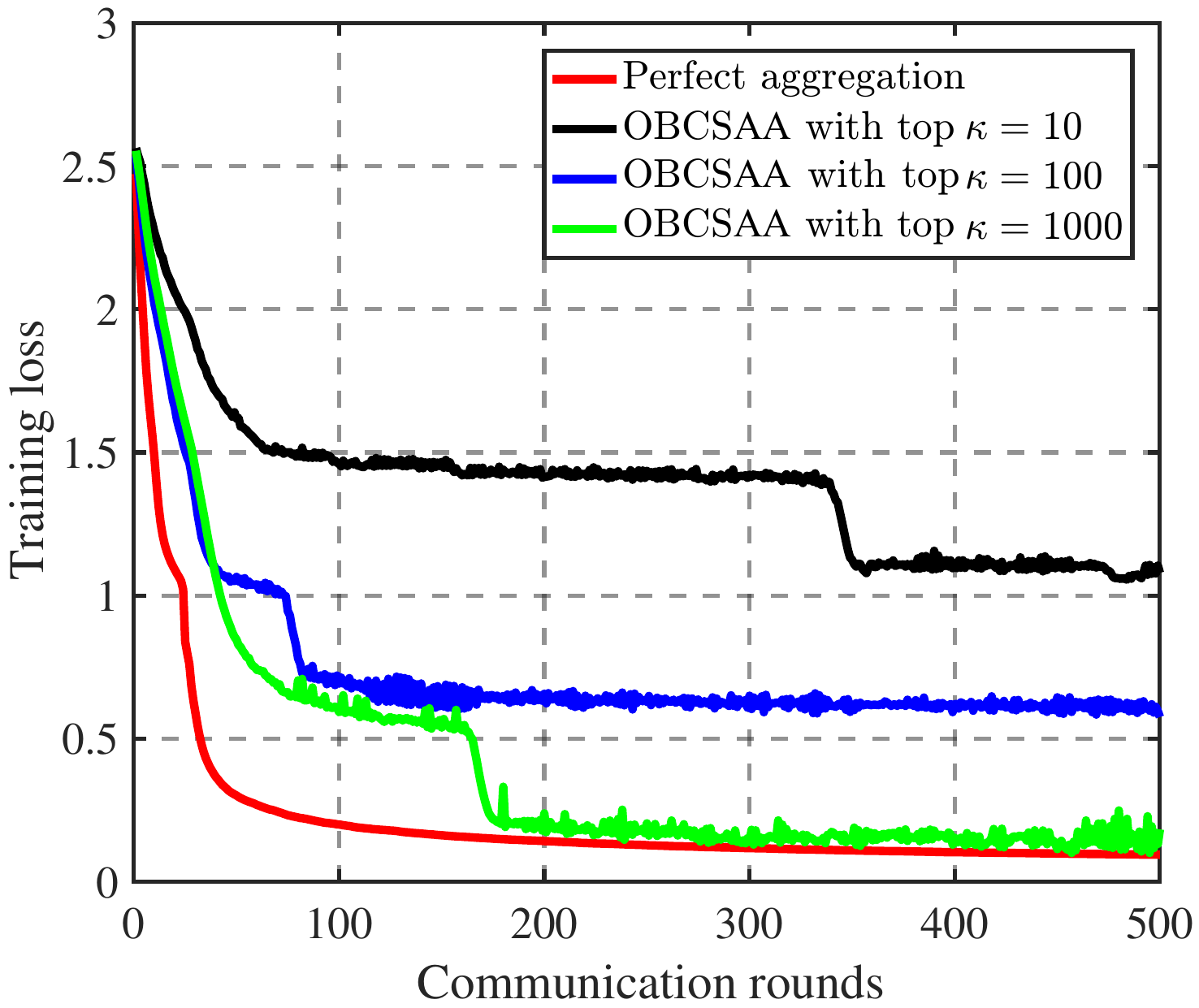}
  \label{fig:Loss-topk} }
  \subfigure[Test accuracy]{\includegraphics[width=0.45\textwidth]{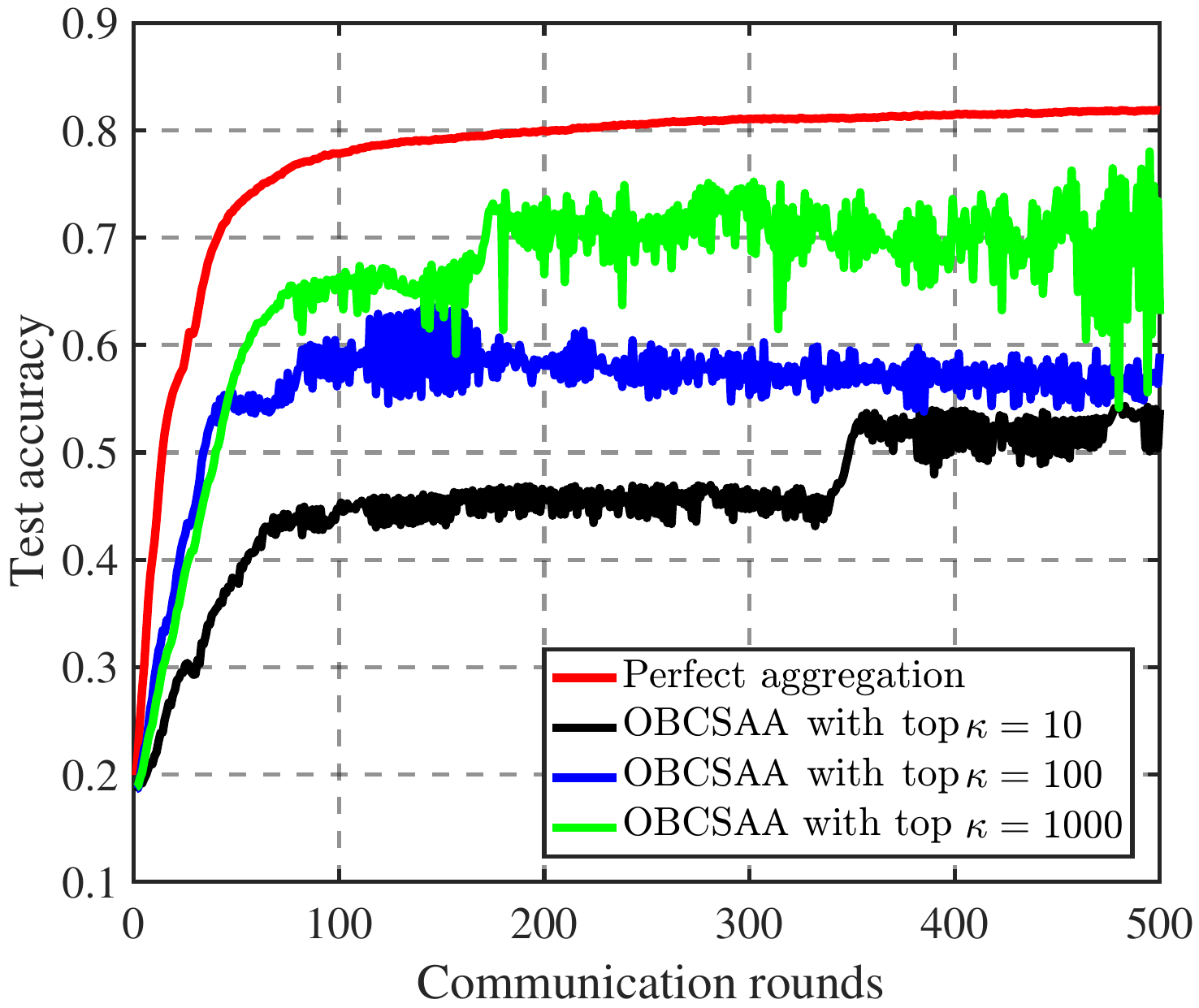}\label{fig:Accuracy-topk}}
  \caption{The performance of our proposed OBCSAA under different sparsification operators compared to perfect aggregation without sparsification.}\label{fig:topk}
\end{figure}

In Fig. \ref{fig:topk}, we first explore the impact of different sparsification operators on our proposed OBCSAA by evaluating the training loss and test accuracy of the MLP. For comparison, we use a benchmark where the transmission of local gradient updates is always reliable and error-free to achieve perfect aggregation, i.e., overlooking the influence of the wireless channel. This benchmark is an ideal case, which is named as \emph{perfect aggregation}. To satisfy RIP condition, $S$ is set to $10000$. It is observed that our proposed OBCSAA can provide desired performance (which approaches to that of \emph{perfect aggregation}), with a degree of sparsification, e.g., $\kappa=1000$, where the sparsity ratio is $1000/50890$. As $\kappa$ increases, when all FL algorithms converge, the training loss decreases and the test accuracy increases. This is because that the larger $\kappa$ is, the less gradient update information loses per communication round.

\begin{figure}[tb]
  \centering
  \subfigure[Training loss]{\includegraphics[width=0.45\textwidth]{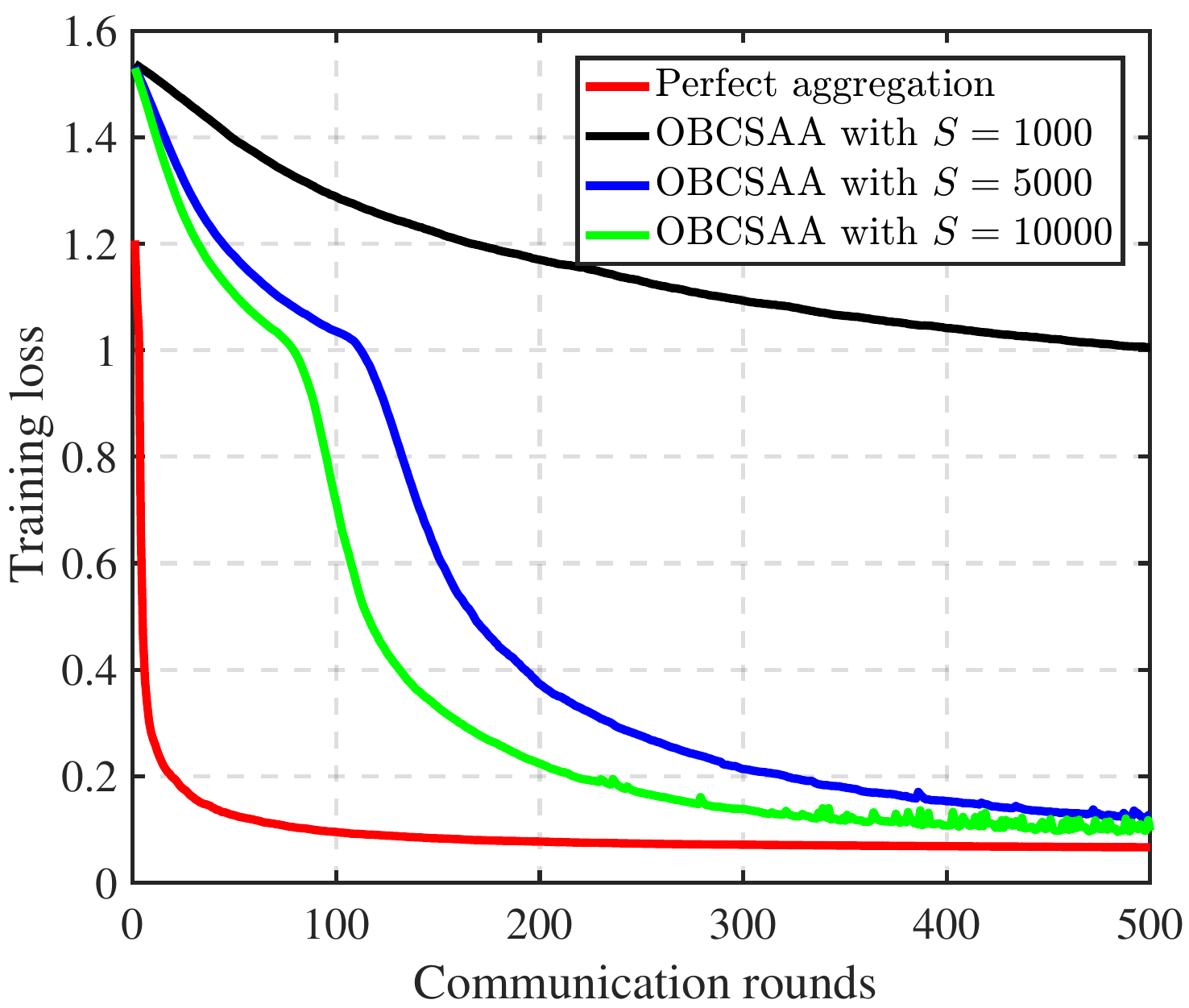}
  \label{fig:Loss_S} }
  \subfigure[Test accuracy]{\includegraphics[width=0.45\textwidth]{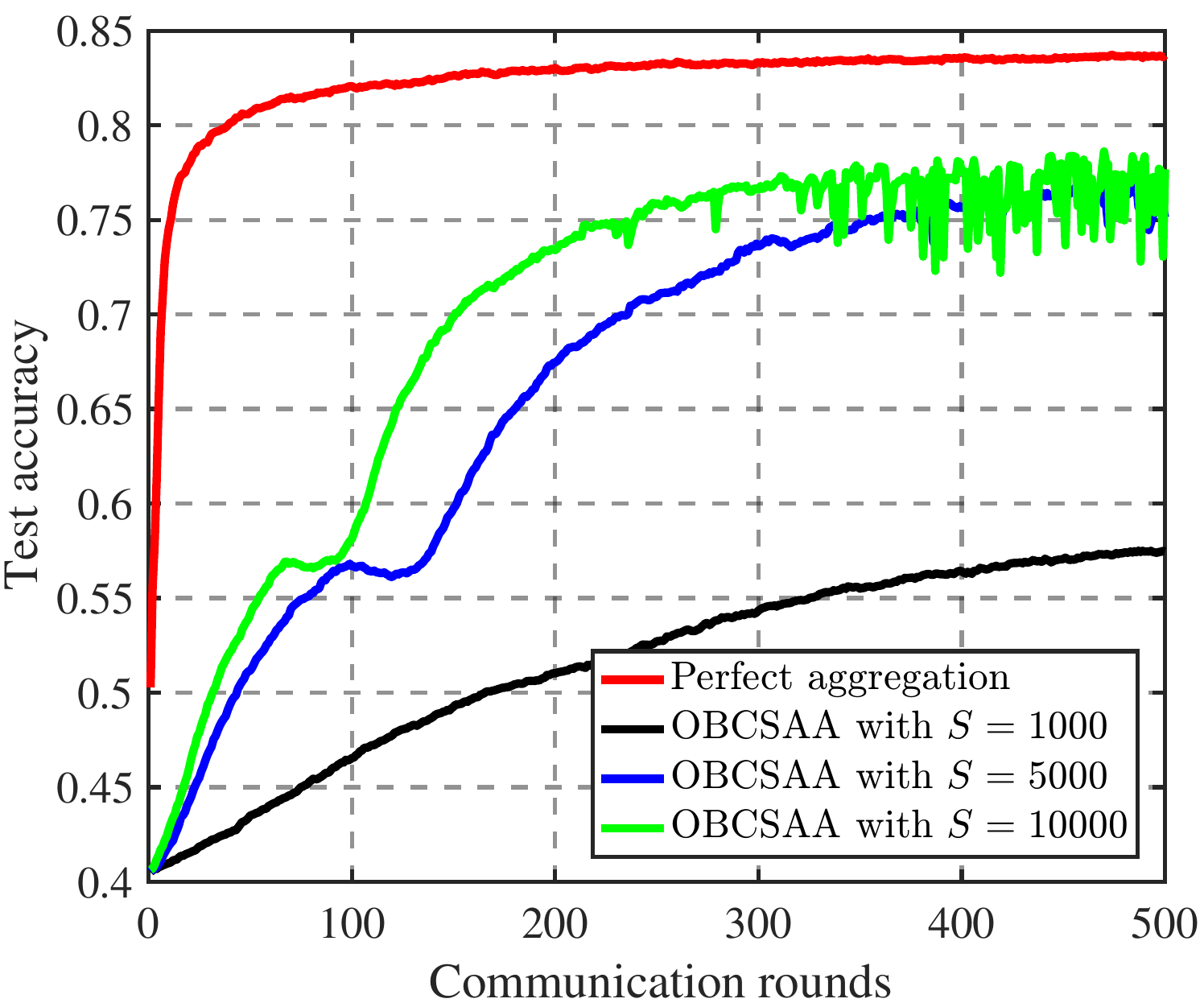}\label{fig:Accuracy_S}}
  \caption{The performance of our proposed OBCSAA under different $S$.}\label{fig:S}
\end{figure}

Fig. \ref{fig:S} shows the impact of the reduced dimension size $S$ on the performance of our proposed OBCSAA under $\kappa=1000$, where the performance increases as $S$ increases. When $S$ is large enough, performance barely increases. This is because that the larger $S$ is, the more conducive to signal reconstruction. When $S$ is large enough, the optimal performance of the reconstruction algorithm is achieved. In fact, the larger $S$ is, the more communication resources are needed. Thus, there is a tradeoff between FL performance and communication efficiency. Compared with the traditional uncompressed FL adopting digital communications, our proposed OBCSAA under $S=5000$ and $\kappa=1000$ occupies only one channel and $\frac{5000}{50890}$ transmission time, while the performance is less than $10$ percent lower than that of \emph{perfect aggregation}. These results illustrates  that our OBCSAA under appropriate parameters can greatly reduce the communication overhead and transmission latency while ensuring considerable FL performance.%When $S=5000$, our proposed OBCSAA can achieve

\begin{figure}[tb]
  \centering
  \subfigure[Training loss]{\includegraphics[width=0.45\textwidth]{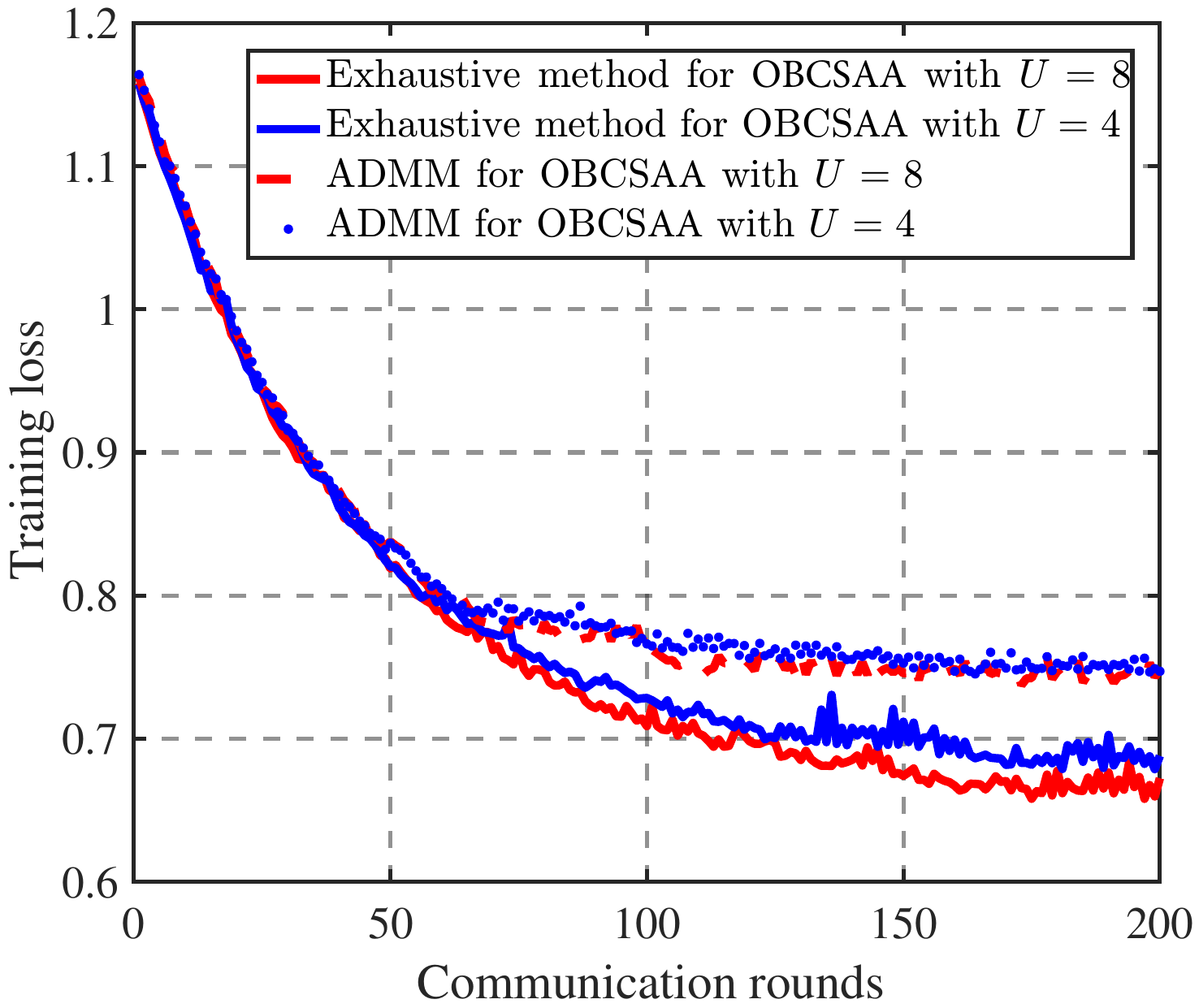}
  \label{fig:Loss_ADMM1} }
  \subfigure[Test accuracy]{\includegraphics[width=0.45\textwidth]{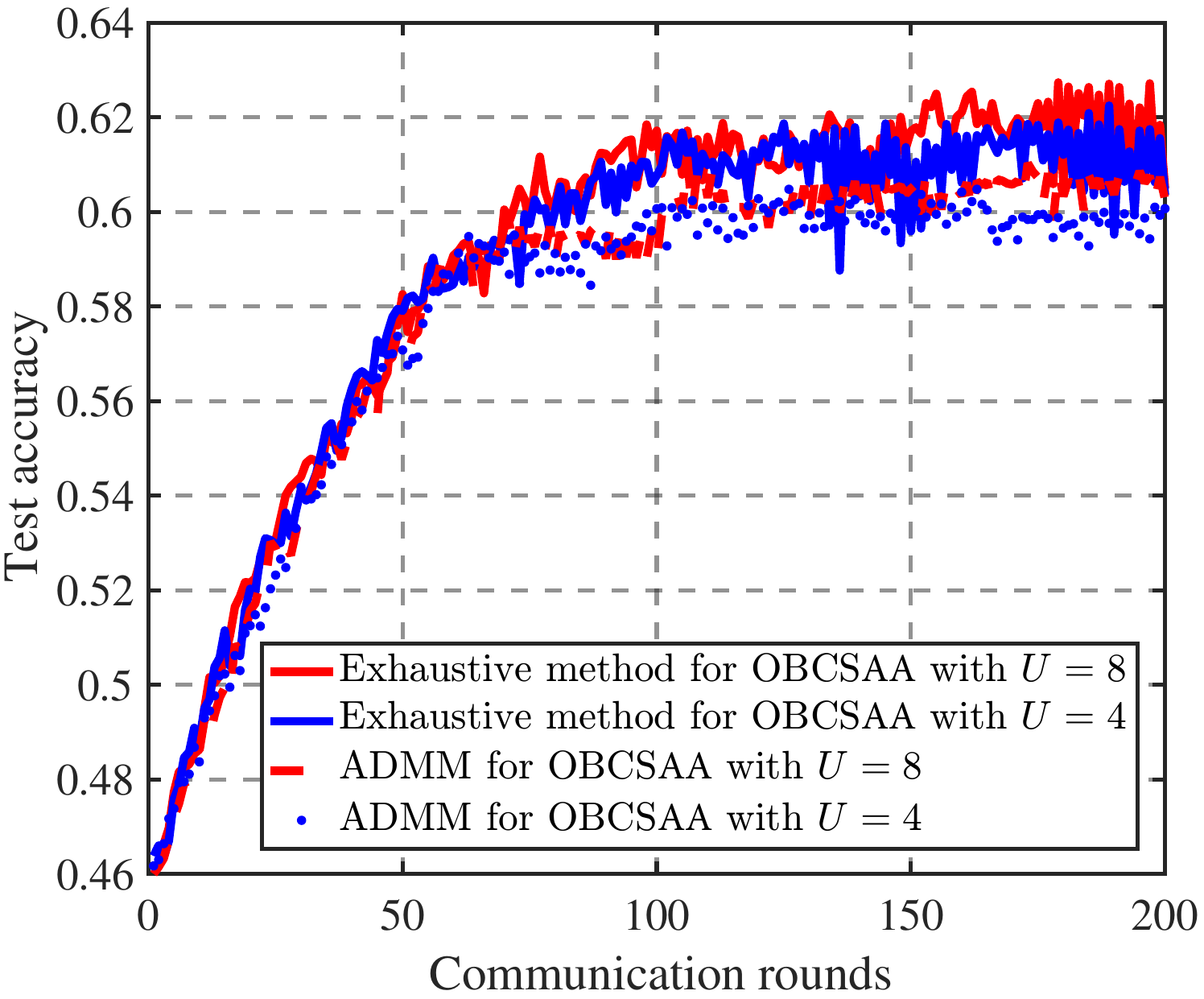}\label{fig:Accuracy_ADMM1}}
  \caption{The performance of joint optimization solving methods for our proposed OBCSAA under different $U$.}\label{fig:ADMM1}
\end{figure}

The performance of the proposed enumeration-based method and ADMM for OBCSAA under different $U$ are compared in Fig. \ref{fig:ADMM1}, where the enumeration-based method has better performance compared to ADMM. This results precisely demonstrate the effectiveness of our joint optimization scheme, which can alleviate the impact of aggregation errors on FL. Besides, we can see that the performance is higher, when the total number of local workers $U$ is larger. This is because an increase in the number of workers leads to an increased volume of data available for the FL algorithm and more workers with high channel gain can be selected.

\begin{figure}[tb]
  \centering
  \subfigure[Training loss]{\includegraphics[width=0.45\textwidth]{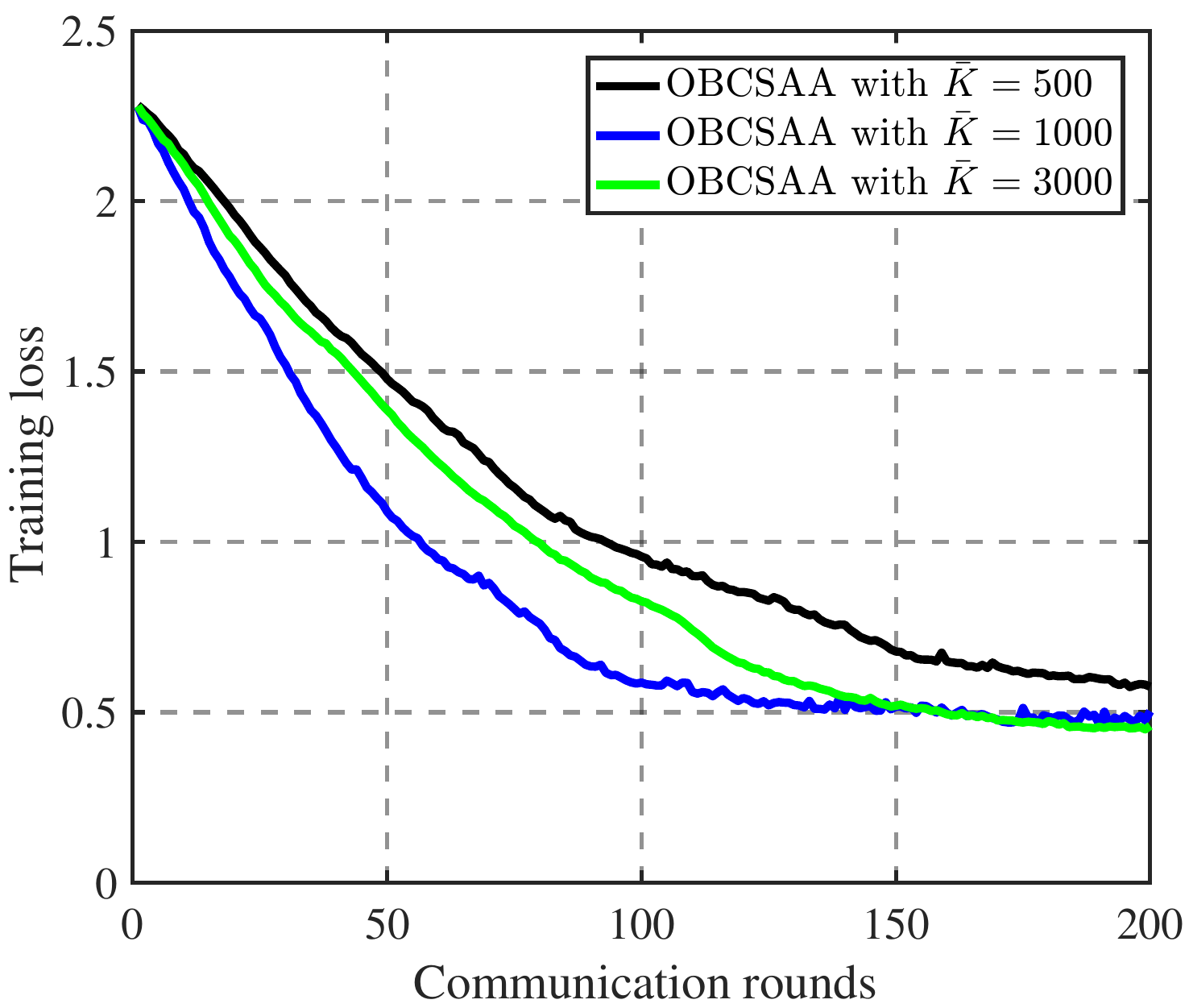}
  \label{fig:Loss_K} }
  \subfigure[Test accuracy]{\includegraphics[width=0.45\textwidth]{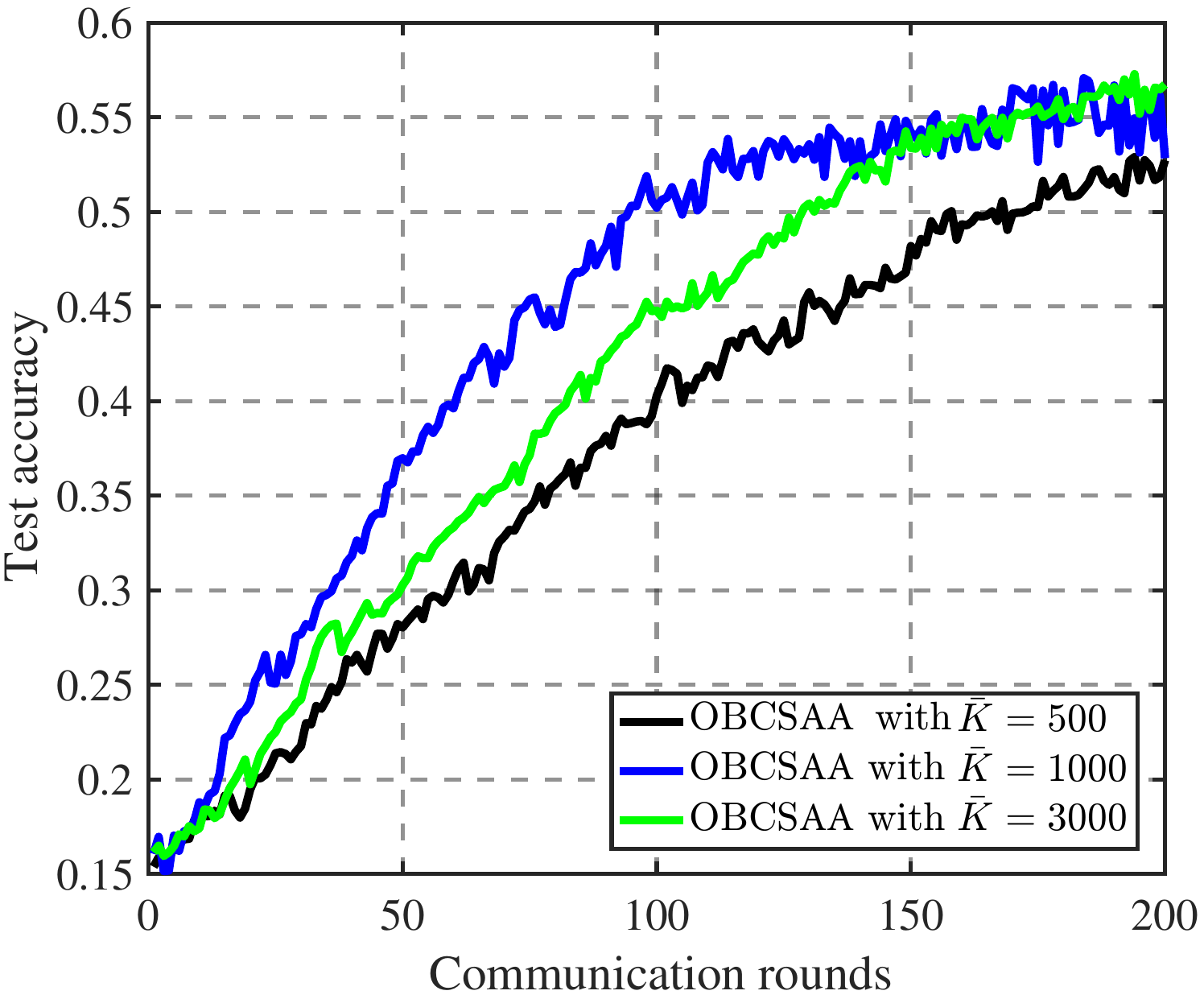}\label{fig:Accuracy_K}}
  \caption{The performance of our proposed OBCSAA under different $\bar{K}$.}\label{fig:K}
\end{figure}

Fig. \ref{fig:K} presents the impact of the number of data samples per worker $\bar{K}$ on our proposed OBCSAA. In this figure, the performance improves as $\bar{K}$ increases. When $\bar{K}$ is large enough, the performance barely improves. This is because that as $\bar{K}$ increases, the PS has more data samples for training and hence has higher performance. As $\bar{K}$ continues to increase, the improvement on learning accuracy becomes trivial when the PS already has enough data samples for training.

\begin{figure}[tb]
  \centering
  \subfigure[Training loss]{\includegraphics[width=0.45\textwidth]{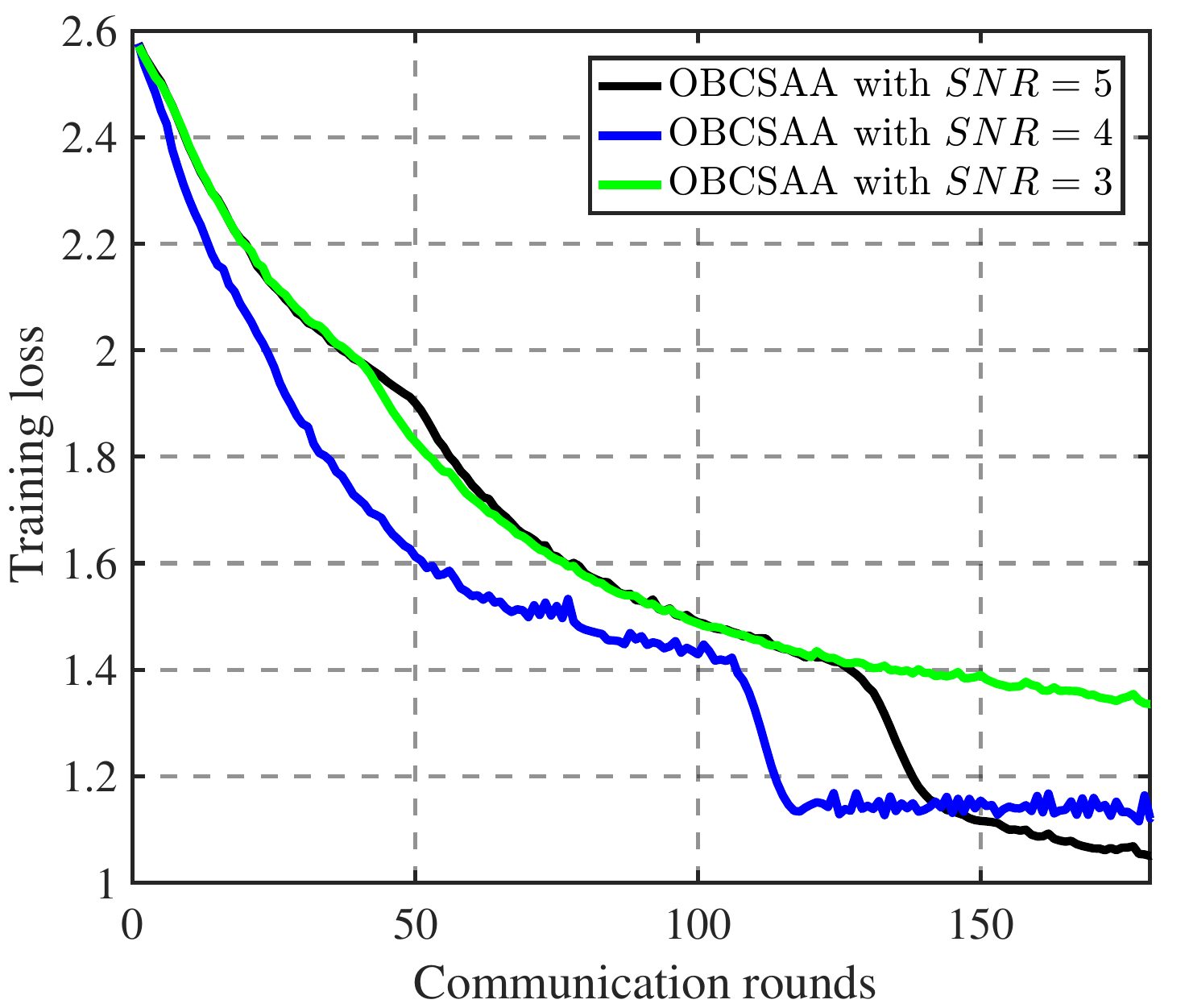}
  \label{fig:Loss_sigma} }
  \subfigure[Test accuracy]{\includegraphics[width=0.455\textwidth]{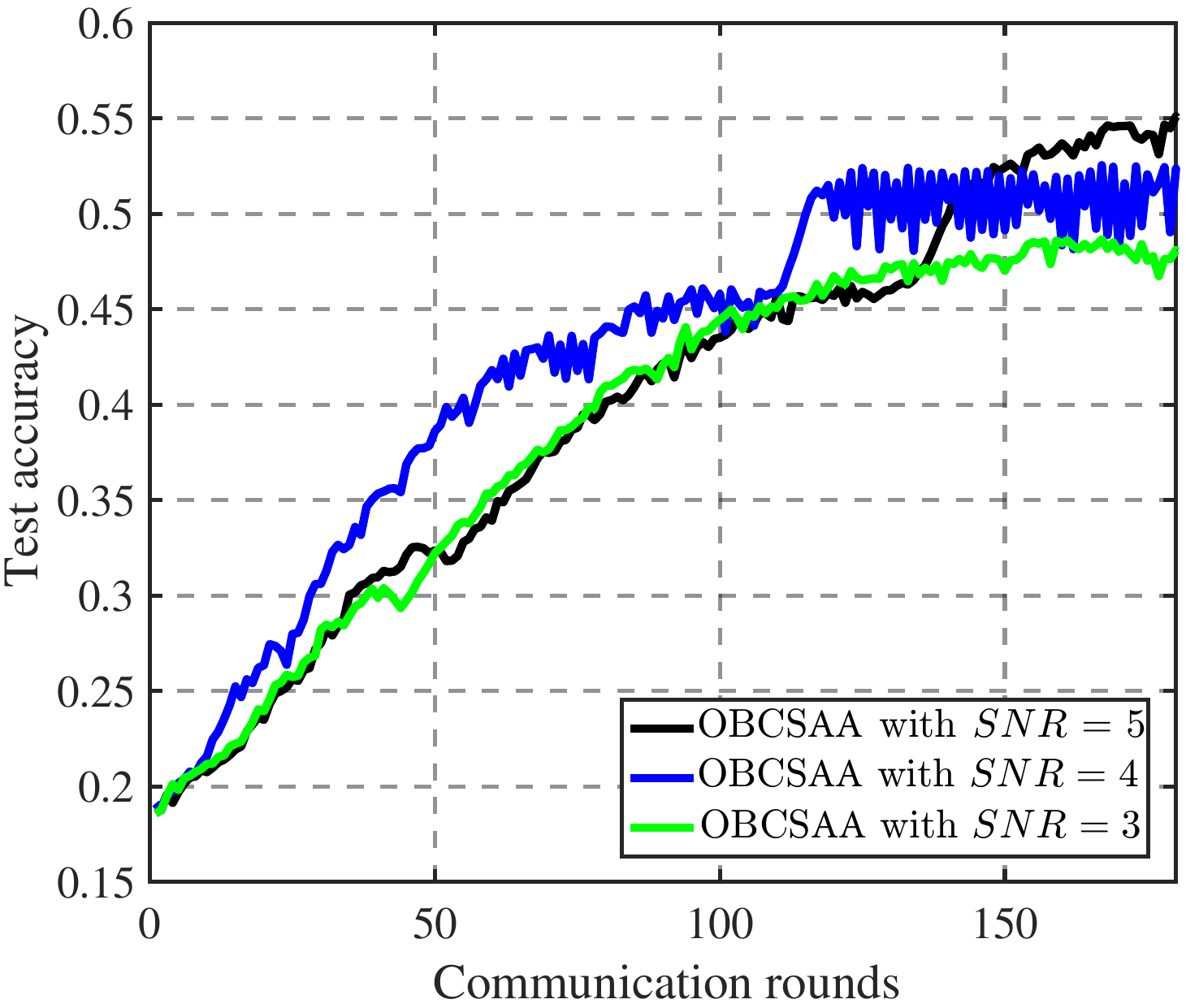}\label{fig:Accuracy_sigma}}
  \caption{The performance of our proposed OBCSAA under different the noise variance.}\label{fig:sigma}
\end{figure}

In Fig. \ref{fig:sigma}, we explore the performance of our proposed OBCSAA under different the noise variance., i.e., different $SNR$. As expected, as the noise variance increases, i.e., $SNR$ decreases, the performance of our proposed OBCSAA decreases. This is because the larger noise variance is, the more errors would be introduced in the training procedure.
%In order to evaluate the performance of our proposed INFLOTA in real application scenarios with real data, we train a multilayer perceptron (MLP) on \textcolor[rgb]{1.00,0.00,0.00}{the} MNIST dataset\footnote{http://yann.lecun.com/exdb/mnist/} with a 784-neuron input layer, a 64-neuron hidden layer, and a 10-neuron softmax output layer. We adopt cross entropy as the loss function, and rectified linear unit (ReLU) as the activation function. The total number of parameters in the MLP is 50890. The learning rate $\alpha$ is set as 0.1. In MNIST dataset, there are 60000 training samples and 10000 test samples. We randomly take out $500-1000$ training samples and distribute them to 20 local workers as their local data. Then the three trained FL are tested with 10000 test samples. We provide the results of cross entropy and test accuracy varies with iterations in Fig. \ref{crossentropy} and Fig. \ref{TESTaccuracy}, respectively. Our proposed INFLOTA outperforms \emph{Random policy}, and achieves comparable performance \textcolor[rgb]{1.00,0.00,0.00}{as} \emph{Perfect aggregation}.

\section{Conclusion}\label{Sec:Conclusion}
This paper studies a communication-efficient FL based on 1-bit CS and analog aggregation transmissions. A closed-form expression is derived for the expected convergence rate of the FL algorithm. This theoretical result reveals the tradeoff between convergence performance and communication efficiency as a result of the aggregation errors caused by sparsification, dimension reduction, quantization, signal reconstruction and noise. Guided by this revelation, a joint
optimization problem of communication and learning is developed to mitigate aggregation errors, which results in an optimal worker selection and power control. An enumeration-based method and an ADMM method are proposed to solve this challenging non-convex problem, which can obtain the optimal solution for small-scale networks and sub-optimal solution for large-scale networks, respectively. Simulation results show that our proposed FL can greatly improve communication efficiency while ensuring desired learning performance.

\section*{Acknowledgments}
We are very grateful to all reviewers who have helped improve the quality of this paper. This work was partly supported by the National Natural Science Foundation of China (Grant Nos. 61871023 and 61931001), Beijing Natural Science Foundation (Grant No. 4202054), and the National Science Foundation of the US (Grant Nos. 1741338 and 1939553).

\begin{appendices}
\section{Proof of \textbf{Lemma \ref{Lemma1}}}\label{Appendix A}
\begin{proof}\label{prlemma1}
Under the \textbf{Assumption 4}, the sparsification error $\mathbf{e}^s_{i,t} \in \mathbb{R}^D, \forall  i, t$ satisfies
\begin{align}\label{eq:e_s}
\mathbb{E}\|\mathbf{e}^s_{i,t}\|^2=\mathbb{E}\|\tilde{\mathbf{g}}_{i,t}-\mathbf{g}_{i,t}\|^2\leq (1+\delta)\frac{D-\kappa}{D}G^2, \ i=1,...,U.
\end{align}

Since $\bm{\Phi}$ satisfies the RIP condition \cite{candes2008restricted},
\begin{align}\label{eq:RIPcon}
(1-\delta)\|\mathbf{x}\|^2 \leq \|\bm{\Phi}\mathbf{x}\|^2\leq (1+\delta)\|\mathbf{x}\|^2,
\end{align}
where $\mathbf{x}$ is a $k$-sparse vector, then the quantization error $\mathbf{e}^q_{i,t} \in \mathbb{R}^S$ is derived as
\begin{align}\label{eq:e_qbound}
\mathbb{E}\|\mathbf{e}^q_{i,t}\|^2 & = \mathbb{E}\|\texttt{sign}(\bm{\Phi} \tilde{\mathbf{g}}_{i,t})-\bm{\Phi} \tilde{\mathbf{g}}_{i,t}\|^2
\\ \nonumber
&\leq \mathbb{E}(\|\texttt{sign}(\bm{\Phi} \tilde{\mathbf{g}}_{i,t})\|^2+\|\bm{\Phi} \tilde{\mathbf{g}}_{i,t}\|^2)
\\ \nonumber
&\leq
S+(1+\delta)\frac{D-\kappa}{D}G^2.
\end{align}

When the PS obtains $\mathbf{\hat{y}}^{desired}_t$ in \eqref{eq:gt}, it reconstructs the signal $\hat{\mathbf{g}}_t$, in the presence of norm-limited measurement error $\mathbf{e}^r_{t}$. It has been shown that robust reconstruction can be achieved by solving \cite{boufounos20081}:
\begin{align}\label{eq:recoverPro}
\hat{\mathbf{g}}_t=\arg \min_{\tilde{\mathbf{g}}_t} \|\tilde{\mathbf{g}}_t\|_1  \quad \texttt{s.t.}\ \|\mathbf{\hat{y}}^{desired}_t-\bm{\Phi} \tilde{\mathbf{g}}_{t}\|^2\leq \varepsilon_t
\end{align}
where $\varepsilon_t$ is the norm-limited boundary, which is given by
%Considering the transmission in \eqref{eq:gt}, the sum error $\mathbf{e}^r_{t}\in \mathbb{R}^S$ of the received signal at the PS is given by
%\begin{align}\label{eq:e_r}
%\mathbf{e}^r_{t}=&\mathbf{\hat{y}}^{desired}_t-\overbrace{(\sum_{i=1}^U K_i\beta_{i,t})^{-1}\sum_{i=1}^U K_i\beta_{i,t} (\bm{\Phi} \tilde{\mathbf{g}}_{i,t})}^{\mathbf{y}^{desired}_t}\\ \nonumber
%  =&(\sum_{i=1}^U K_i\beta_{i,t})^{-1}\sum_{i=1}^U K_i\beta_{i,t} \mathbf{e}^q_{i,t}+(\sum_{i=1}^U K_i\beta_{i,t}b_{t})^{-1}\mathbf{z}_{t}
%\end{align}
%where $\mathbf{y}^{desired}_t$ is the desired post-processed signal assuming without 1-bit quantization and noise. This received error $\mathbf{e}^r_{t}$ is bounded by
\begin{align} \nonumber
\mathbb{E}\|\mathbf{\hat{y}}^{desired}_t-\bm{\Phi} \tilde{\mathbf{g}}_{t}\|^2
=&\mathbb{E}\left\|\mathbf{\hat{y}}^{desired}_t-\frac{\sum_{i=1}^U K_i\beta_{i,t} (\bm{\Phi} \tilde{\mathbf{g}}_{i,t})}{\sum_{i=1}^U K_i\beta_{i,t}}\right\|^2\\ \nonumber
  =&\mathbb{E}\left\|\frac{\sum_{i=1}^U K_i\beta_{i,t} \mathbf{e}^q_{i,t}}{\sum_{i=1}^U K_i\beta_{i,t}}+\frac{\mathbf{z}_{t}}{\sum_{i=1}^U K_i\beta_{i,t}b_{t}}\right\|^2
  \\ \nonumber
  =&\mathbb{E}\left\|\mathbf{e}^q_{1,t}+\frac{\mathbf{z}_{t}}{\sum_{i=1}^U K_i\beta_{i,t}b_{t}}\right\|^2
  \\ \nonumber
  \leq& \mathbb{E}\|\mathbf{e}^q_{1,t}\|^2
  +\mathbb{E}\left\|\frac{\mathbf{z}_{t}}{\sum_{i=1}^U K_i\beta_{i,t}b_{t}}\right\|^2
  \\ \nonumber
  \leq& S+(1+\delta)\frac{D-\kappa}{D}G^2
  +\frac{S\sigma^2}{\left(\sum_{i=1}^U K_i\beta_{i,t}b_{t}\right)^{2}}\\ \label{eq:e_rbound}
  \doteq&\varepsilon_t.
\end{align}

% to $\mathbf{e}^r_{t}$, i.e., $\|\mathbf{e}^r_{t}\|^2\leq \varepsilon_t\doteq S+(1+\delta)\frac{D-\kappa}{D}G^2
%  +\frac{S\sigma^2}{\big(\sum_{i=1}^U K_i\beta_{i,t}b_{t}\big)^{2}}$.

In this case, the reconstruction error norm is bounded by
\begin{align}\label{eq:boundedreconstruction}
\|\hat{\mathbf{g}}_t-\tilde{\mathbf{g}}_t\|^2\leq \frac{C^2}{S} \varepsilon_t,
\end{align}
where $C$ is the constant depending on the properties of the measurement matrix $\bm{\Phi}$ but not on the signal \cite{candes2006stable}. According to the \textbf{Theorem 1.2} in \cite{candes2008restricted}, if $\bm{\Phi}$ has $\delta\leq \sqrt{2}-1$, $C$ can be given by
\begin{align}\label{eq:C}
C=\frac{2\varpi}{1-\varrho},
\end{align}
where $\varpi=\frac{2\sqrt{1+\delta}}{\sqrt{1-\delta}}$ and $\varrho=\frac{\sqrt{2}\delta}{1-\delta}$.

It is noted that $\tilde{\mathbf{g}}_t$ in \eqref{eq:boundedreconstruction} is the desired sparse global gradient after the worker selection. As a result, the total error at the $t$-th iteration in FL is given by
\begin{align}\label{eq:et}
\mathbb{E}\|\mathbf{e}_{t}\|^2=&\mathbb{E}(\|\hat{\mathbf{g}}_t-\mathbf{g}_t\|^2)= \mathbb{E}(\|\hat{\mathbf{g}}_t-(\tilde{\mathbf{g}}_t+\mathbf{e}^s_{t})\|^2)\leq \mathbb{E}(\|\hat{\mathbf{g}}_t-\tilde{\mathbf{g}}_t\|^2 +\|\mathbf{e}^s_{t}\|^2)
\\ \nonumber &\leq \frac{C^2}{S} \varepsilon_t + \sum_{i=1}^U\beta_{i,t}(1+\delta)\frac{D-\kappa}{D}G^2\\ \nonumber &=C^2 \left(1+(1+\delta)\frac{D-\kappa}{SD}G^2
  +\frac{\sigma^2}{\left(\sum_{i=1}^U K_i\beta_{i,t}b_{t}\right)^{2}}\right) + \sum_{i=1}^U\beta_{i,t}(1+\delta)\frac{D-\kappa}{D}G^2,
\end{align}
where $\mathbf{e}^s_{t}=\sum_{i=1}^U\beta_{i,t}\mathbf{e}^s_{i,t}$.
\end{proof}

\section{Proof of \textbf{Theorem \ref{Theorem1}}}\label{Appendix B}
\begin{proof}
To prove \textbf{Theorem \ref{Theorem1}}, we first rewrite $F(\mathbf{w}_{t})$ as the expression of its second-order Taylor expansion, which is given by
\begin{align}\label{Taylor}
F(\mathbf{w}_{t})
&=F(\mathbf{w}_{t-1})+(\mathbf{w}_{t}-\mathbf{w}_{t-1})^T\nabla F(\mathbf{w}_{t-1})+\frac{1}{2}(\mathbf{w}_{t}-\mathbf{w}_{t-1})^T\nabla^2 F(\mathbf{w}_{t-1})(\mathbf{w}_{t}-\mathbf{w}_{t-1})\nonumber\\
&\overset{(a)}{\leq} F(\mathbf{w}_{t-1})+(\mathbf{w}_{t}-\mathbf{w}_{t-1})^T\nabla F(\mathbf{w}_{t-1})+\frac{L}{2}\parallel\mathbf{w}_{t}-\mathbf{w}_{t-1}\parallel^2,
\end{align}
where \textbf{Assumption 2} is applied in the step (a).

%All workers update their local FL model by exploiting a standard gradient descent method, and then the local parameter of the $i$-th worker updated at the $t$ iteration is given by
%\begin{equation}\label{eq:localupdate}
%  \mathbf{w}_{i,t}=\mathbf{w}_{t-1}-\frac{\alpha}{K_i}\sum_{k=1}^{K_i}\nabla f(\mathbf{w}_{t-1},\mathbf{x}_{i,k},\mathbf{y}_{i,k}), \quad i=1,2,...,U
%\end{equation}
After recovering the desired $\hat{\mathbf{g}}_{t}$ from the received signal by solving \eqref{eq:recoverPro}, then the common model is updated by
\begin{align}\label{eq:updatew}
%\begin{split}
\mathbf{w}_{t}=& \mathbf{w}_{t-1}-\alpha \hat{\mathbf{g}}_{t}
\nonumber\\%  \end{split}
=&\mathbf{w}_{t-1}-\alpha(\nabla F(\mathbf{w}_{t-1})-\mathbf{o}),
\end{align}
where
\begin{align}\label{14}
\mathbf{o}=&\nabla F(\mathbf{w}_{t-1})-\hat{\mathbf{g}}_{t}.
\end{align}

%Substituting \eqref{eq:localupdate} to \eqref{eq:gt}, we can have
%\begin{align}\label{13}
%%\begin{split}
%\mathbf{w}_{t}=& \mathbf{w}_{t-1}+(\sum_{i=1}^U K_i\bm{\beta}_{i,t}\odot\bm{b}_{t})^{\odot-1}\odot\mathbf{z}_{t}
%  -\alpha(\sum_{i=1}^U K_i\bm{\beta}_{i,t})^{\odot-1}\odot \sum_{i=1}^U\sum_{k=1}^{K_i}\bm{\beta}_{i,t}\nabla f(\mathbf{w}_{t-1};\mathbf{x}_{i,k},y_{i,k})\nonumber\\%  \end{split}
%=&\mathbf{w}_{t-1}-\alpha(\nabla F(\mathbf{w}_{t-1})-\mathbf{o})
%\end{align}
%where
%\begin{align}\label{14}
%o=&\nabla F(\mathbf{w}_{t-1})+(\alpha\sum_{i=1}^U K_i\bm{\beta}_{i,t}\odot\bm{b}_{t})^{\odot-1}\odot\mathbf{z}_{t}
%\nonumber\\&-(\sum_{i=1}^U K_i\bm{\beta}_{i,t})^{\odot-1}\odot \sum_{i=1}^U\sum_{k=1}^{K_i}\bm{\beta}_{i,t}\nabla f(\mathbf{w}_{t-1};\mathbf{x}_{i,k},y_{i,k}).
%\end{align}

Given the learning rate $\alpha=\frac{1}{L}$ (a special setting for simpler expression without losing the generality),
then the expected optimization function of $\mathbb{E}[F(\mathbf{w}_{t})]$ from \eqref{Taylor} can be expressed as
\begin{align}\label{eq:Egt}
\mathbb{E}[F(\mathbf{w}_{t})]\leq & \mathbb{E}\bigg[F(\mathbf{w}_{t-1})-\alpha(\nabla F(\mathbf{w}_{t-1})-\mathbf{o})^T\nabla F(\mathbf{w}_{t-1})
+\frac{L\alpha^2}{2}\parallel\nabla F(\mathbf{w}_{t-1})-\mathbf{o}\parallel^2\bigg]\nonumber\\
\overset{(b)}{=}&\mathbb{E}[F(\mathbf{w}_{t-1})]-\frac{1}{2L}\parallel\nabla F(\mathbf{w}_{t-1})\parallel^2+\frac{1}{2L}\mathbb{E}[\parallel \mathbf{o}\parallel^2],
\end{align}
where the step (b) is derived from the fact that
%\begin{small}
\begin{align}
\frac{L\alpha^2}{2}\parallel\nabla F(\mathbf{w}_{t-1})-\mathbf{o}&\parallel^2=\frac{1}{2L}\parallel\nabla F(\mathbf{w}_{t-1})\parallel^2-\frac{1}{L}\mathbf{o}^T\nabla F(\mathbf{w}_{t-1})+\frac{1}{2L}\parallel \mathbf{o}\parallel^2.
\end{align}
%\end{small}

%We denote the error $\mathbf{e}_{t}=\hat{\mathbf{g}}_{t}-\mathbf{g}_{t}$.
According to \eqref{eq:et}, $\|\mathbf{e}_{t}\|^2\leq \frac{C^2}{S} \varepsilon_{t} + \sum_{i=1}^U\beta_{i,t}(1+\delta)\frac{D-\kappa}{D}G^2$.
Then we derive $\mathbb{E}[\parallel \mathbf{o}\parallel^2]$ as follows%\begin{small}
\begin{align}\label{eq:Eo}
\mathbb{E}[\parallel \mathbf{o}\parallel^2]=&\mathbb{E}[\|\nabla F(\mathbf{w}_{t-1})-\hat{\mathbf{g}}_{t}\|]
\nonumber\\ \nonumber
=&\mathbb{E}[\|\nabla F(\mathbf{w}_{t-1})-\mathbf{g}_{t}-\mathbf{e}_{t}\|]
\\ \nonumber
=&\mathbb{E}\Bigg[\bigg\|\frac{\sum_{i=1}^U\sum_{k=1}^{K_i}\nabla f(\mathbf{w}_{t-1};\mathbf{x}_{i,k},\mathbf{y}_{i,k})}{K}
\\ \nonumber
&-(\sum_{i=1}^U K_i\beta_{i,t})^{-1} \sum_{i=1}^U\sum_{k=1}^{K_i}\beta_{i,t}\nabla f(\mathbf{w}_{t-1};\mathbf{x}_{i,k},\mathbf{y}_{i,k}) - \mathbf{e}_{t}\bigg\|^2\Bigg]
\\
\leq& \mathbb{E}\Bigg[\bigg\|\frac{\sum_{i=1}^U\sum_{k=1}^{K_i}\nabla f(\mathbf{w}_{t-1},\mathbf{x}_{i,k},\mathbf{y}_{i,k})(1-\beta_{i,t})}{K} -\mathbf{e}_{t}\bigg\|^2\Bigg],
\end{align}
%\end{small}
%where $1$ is a vector, the dimension of which is the same as $\bm{\beta}_{i,t}$ and all the entries of which are 1.

Applying the triangle inequality of norms: $\| \mathbf{X}+\mathbf{Y}\| \leq \| \mathbf{X}\|+\| \mathbf{Y}\|$, and the submultiplicative property of norms: $\| \mathbf{X}\mathbf{Y}\| \leq \| \mathbf{X}\|\| \mathbf{Y}\|$, we further derive \eqref{eq:Eo} as follows
\begin{align}\label{eq:Eo1}
\mathbb{E}[\parallel \mathbf{o}\parallel^2]
\leq &\mathbb{E}\Bigg[\frac{\sum_{i=1}^U\sum_{k=1}^{K_i}\parallel\nabla f(\mathbf{w}_{t-1};\mathbf{x}_{i,k},\mathbf{y}_{i,k})(1-\beta_{i,t})\parallel^2}{K}\Bigg]
+\mathbb{E}[\|\mathbf{e}_{t-1}\|^2] \nonumber\\ \nonumber
\leq &\mathbb{E}\Bigg[\frac{\sum_{i=1}^U\sum_{k=1}^{K_i}\parallel\nabla f(\mathbf{w}_{t-1};\mathbf{x}_{i,k},\mathbf{y}_{i,k})\parallel^2(1-\beta_{i,t})^2}{K}\Bigg] +\mathbb{E}[\|\mathbf{e}_{t-1}\|^2]\\
\leq &\frac{\sum_{i=1}^U\sum_{k=1}^{K_i}\parallel\nabla f(\mathbf{w}_{t-1};\mathbf{x}_{i,k},\mathbf{y}_{i,k})\parallel^2(1-\beta_{i,t})}{K}
+\frac{C^2}{S} \varepsilon_{t} + \sum_{i=1}^U\beta_{i,t}(1+\delta)\frac{D-\kappa}{D}G^2.
\end{align}

Applying \eqref{eq:bound} in \textbf{Assumption 3} to \eqref{eq:Eo1}, we further derive the following result as
\begin{align}\label{eq:Eo2}
\mathbb{E}[\parallel \mathbf{o}\parallel^2]
\leq&\frac{1}{K}\sum_{i=1}^U K_i(\rho_1+\rho_2\parallel\nabla F(\mathbf{w}_{t-1})\parallel^2) (1-\beta_{i,t})+\frac{C^2}{S} \varepsilon_{t} + \sum_{i=1}^U\beta_{i,t}(1+\delta)\frac{D-\kappa}{D}G^2.
\end{align}

Substituting \eqref{eq:Eo2} to \eqref{eq:Egt}, we have:
\begin{align}\label{eq:Egt1}
\mathbb{E}[F(\mathbf{w}_{t})]
\leq&\frac{1}{2L}\Bigg(\frac{1}{K}\sum_{i=1}^U K_i(\rho_1+\rho_2\parallel\nabla F(\mathbf{w}_{t-1})\parallel^2) (1-\beta_{i,t})
+\frac{C^2}{S} \varepsilon_{t} + \sum_{i=1}^U\beta_{i,t}(1+\delta)\frac{D-\kappa}{D}G^2\Bigg) \nonumber\\\nonumber&+\mathbb{E}[F(\mathbf{w}_{t-1})]-\frac{1}{2L}\parallel\nabla F(\mathbf{w}_{t-1})\parallel^2\\ \nonumber
=&\mathbb{E}[F(\mathbf{w}_{t-1})]
+\bigg(\frac{\sum_{i=1}^U K_i\rho_2(1-\beta_{i,t})}{2LK}-\frac{1}{2L}\bigg)\parallel\nabla F(\mathbf{w}_{t-1})\parallel^2 \\
%+&\frac{1}{2LK}\sum_{i=1}^U K_i\rho_1
&+\frac{\sum_{i=1}^U K_i\rho_1(1-\beta_{i,t})}{2LK}+\frac{1}{2L}(\frac{C^2}{S} \varepsilon_{t} + \sum_{i=1}^U\beta_{i,t}(1+\delta)\frac{D-\kappa}{D}G^2).
\end{align}

%Subtract $\mathbb{E}[F(\mathbf{w}^*)]$ from both sides of \eqref{eq:Egt1}, we have:
%\begin{align}\label{eq:EgtSubtract}
%\mathbb{E}[F(\mathbf{w}_{t})&-F(\mathbf{w}^*)]
%\leq\mathbb{E}[F(\mathbf{w}_{t-1})-F(\mathbf{w}^*)]+\frac{1}{2L}C^2\varepsilon_{t-1}^2\\ \nonumber
%&+\bigg(\frac{\sum_{i=1}^U (K_i\rho_2\parallel(\mathbf{1}-\bm{\beta}_{i,t})\parallel^2)}{2LK}-\frac{1}{2L}\bigg)\parallel\nabla F(\mathbf{w}_{t-1})\parallel^2
%+\frac{\sum_{i=1}^U K_i\rho_1\parallel(\mathbf{1}-\bm{\beta}_{i,t})\parallel^2}{2LK}.
%\end{align}
Summing up the inequality above from $t = 1$ to $t = T$, we get
\begin{align}\label{eq:EgtSumming}
\mathbb{E}[F(\mathbf{w}_{t})&-F(\mathbf{w}_0)]
\leq -\sum_{t=1}^{T}A_t\parallel\nabla F(\mathbf{w}_{t-1})\parallel^2 +\sum_{t=1}^{T}B_t,
\end{align}
where
\begin{align}\label{eq:At}
A_{t}=\frac{1}{2L}-\frac{\sum_{i=1}^U K_i\rho_2(1-\beta_{i,t})}{2LK},
\end{align}
\begin{align}\label{eq:Bt}
B_{t}&=\frac{\sum_{i=1}^U K_i\rho_1(1-\beta_{i,t})}{2LK}+\frac{1}{2L}(\frac{C^2}{S} \varepsilon_{t} + \sum_{i=1}^U\beta_{i,t}(1+\delta)\frac{D-\kappa}{D}G^2).
\end{align}
The inequality \eqref{eq:EgtSumming} can be also written as
\begin{align}
\sum_{t=1}^{T}A_t\parallel\nabla F(\mathbf{w}_{t-1})\parallel^2
\leq& \mathbb{E}[F(\mathbf{w}_0)-F(\mathbf{w}_{t})] +\sum_{t=1}^{T}B_t\label{eq:EgtSumming1}
\leq \mathbb{E}[F(\mathbf{w}_0)-F(\mathbf{w}^*)] +\sum_{t=1}^{T}B_t.
\end{align}

Since $\frac{1-\rho_2}{2L}\leq A_t\leq \frac{1}{2L}$, we have
\begin{align}
\frac{1}{T}\sum_{t=1}^{T}\frac{1-\rho_2}{2L}\parallel\nabla &F(\mathbf{w}_{t-1})\parallel^2
\leq \frac{1}{T}\sum_{t=1}^{T}A_t\parallel\nabla F(\mathbf{w}_{t-1})\parallel^2 \label{eq:EgtSumming2}
\leq \frac{1}{T}\mathbb{E}[F(\mathbf{w}_0)-F(\mathbf{w}^*)] +\frac{1}{T}\sum_{t=1}^{T}B_t.
\end{align}

As a result, we get
%\begin{small}
\begin{align}
\frac{1}{T}\sum_{t=1}^{T}\frac{}{}\parallel\nabla F(\mathbf{w}_{t-1})\parallel^2
\leq& \frac{2L}{T(1-\rho_2)}\mathbb{E}[F(\mathbf{w}_0)-F(\mathbf{w}^*)] \label{eq:EgtSumming3}+\frac{2L}{T(1-\rho_2)}\sum_{t=1}^{T}B_t.
\end{align}
The proof is completed.
\end{proof}

\end{appendices}

\bibliographystyle{IEEEtran}
\bibliography{ref}

% Generated by IEEEtran.bst, version: 1.14 (2015/08/26)
\begin{thebibliography}{10}
\providecommand{\url}[1]{#1}
\csname url@samestyle\endcsname
\providecommand{\newblock}{\relax}
\providecommand{\bibinfo}[2]{#2}
\providecommand{\BIBentrySTDinterwordspacing}{\spaceskip=0pt\relax}
\providecommand{\BIBentryALTinterwordstretchfactor}{4}
\providecommand{\BIBentryALTinterwordspacing}{\spaceskip=\fontdimen2\font plus
\BIBentryALTinterwordstretchfactor\fontdimen3\font minus
  \fontdimen4\font\relax}
\providecommand{\BIBforeignlanguage}[2]{{%
\expandafter\ifx\csname l@#1\endcsname\relax
\typeout{** WARNING: IEEEtran.bst: No hyphenation pattern has been}%
\typeout{** loaded for the language `#1'. Using the pattern for}%
\typeout{** the default language instead.}%
\else
\language=\csname l@#1\endcsname
\fi
#2}}
\providecommand{\BIBdecl}{\relax}
\BIBdecl

\bibitem{he2015approximate}
Z.~He, Z.~Cai, S.~Cheng, and X.~Wang, ``Approximate aggregation for tracking
  quantiles and range countings in wireless sensor networks,''
  \emph{Theoretical Computer Science}, vol. 607, pp. 381--390, 2015.

\bibitem{li2017approximate}
J.~Li, S.~Cheng, Z.~Cai, J.~Yu, C.~Wang, and Y.~Li, ``Approximate holistic
  aggregation in wireless sensor networks,'' \emph{ACM Transactions on Sensor
  Networks (TOSN)}, vol.~13, no.~2, pp. 1--24, 2017.

\bibitem{mcmahan2017communication}
B.~McMahan, E.~Moore, D.~Ramage, S.~Hampson, and B.~A. y~Arcas,
  ``Communication-efficient learning of deep networks from decentralized
  data,'' in \emph{Artificial Intelligence and Statistics}.\hskip 1em plus
  0.5em minus 0.4em\relax PMLR, 2017, pp. 1273--1282.

\bibitem{yang2019federated}
Q.~Yang, Y.~Liu, T.~Chen, and Y.~Tong, ``Federated machine learning: Concept
  and applications,'' \emph{ACM Transactions on Intelligent Systems and
  Technology (TIST)}, vol.~10, no.~2, pp. 1--19, 2019.

\bibitem{chen2020joint}
M.~Chen, Z.~Yang, W.~Saad, C.~Yin, H.~V. Poor, and S.~Cui, ``A joint learning
  and communications framework for federated learning over wireless networks,''
  \emph{IEEE Transactions on Wireless Communications}, 2020.

\bibitem{lin2018deep}
Y.~Lin, S.~Han, H.~Mao, Y.~Wang, and B.~Dally, ``Deep gradient compression:
  Reducing the communication bandwidth for distributed training,'' in
  \emph{International Conference on Learning Representations}, 2018.

\bibitem{aji2017sparse}
A.~F. Aji and K.~Heafield, ``Sparse communication for distributed gradient
  descent,'' \emph{arXiv preprint arXiv:1704.05021}, 2017.

\bibitem{lin2017deep}
Y.~Lin, S.~Han, H.~Mao, Y.~Wang, and W.~J. Dally, ``Deep gradient compression:
  Reducing the communication bandwidth for distributed training,'' \emph{arXiv
  preprint arXiv:1712.01887}, 2017.

\bibitem{liu2019decentralized}
Y.~Liu, K.~Yuan, G.~Wu, Z.~Tian, and Q.~Ling, ``Decentralized dynamic admm with
  quantized and censored communications,'' in \emph{2019 53rd Asilomar
  Conference on Signals, Systems, and Computers}.\hskip 1em plus 0.5em minus
  0.4em\relax IEEE, 2019, pp. 1496--1500.

\bibitem{seide20141}
F.~Seide, H.~Fu, J.~Droppo, G.~Li, and D.~Yu, ``1-bit stochastic gradient
  descent and its application to data-parallel distributed training of speech
  dnns,'' in \emph{Fifteenth Annual Conference of the International Speech
  Communication Association}, 2014.

\bibitem{alistarh2017qsgd}
D.~Alistarh, D.~Grubic, J.~Li, R.~Tomioka, and M.~Vojnovic, ``Qsgd:
  Communication-efficient sgd via gradient quantization and encoding,'' in
  \emph{Advances in Neural Information Processing Systems}, 2017, pp.
  1709--1720.

\bibitem{liu2019communication}
Y.~Liu, W.~Xu, G.~Wu, Z.~Tian, and Q.~Ling, ``Communication-censored admm for
  decentralized consensus optimization,'' \emph{IEEE Transactions on Signal
  Processing}, vol.~67, no.~10, pp. 2565--2579, 2019.

\bibitem{8755802}
P.~{Xu}, Z.~{Tian}, Z.~{Zhang}, and Y.~{Wang}, ``Coke: Communication-censored
  kernel learning via random features,'' in \emph{2019 IEEE Data Science
  Workshop (DSW)}, 2019, pp. 32--36.

\bibitem{chen2018lag}
T.~Chen, G.~Giannakis, T.~Sun, and W.~Yin, ``Lag: Lazily aggregated gradient
  for communication-efficient distributed learning,'' in \emph{Advances in
  Neural Information Processing Systems}, 2018, pp. 5050--5060.

\bibitem{8646657}
P.~{Xu}, Z.~{Tian}, and Y.~{Wang}, ``An energy-efficient distributed average
  consensus scheme via infrequent communication,'' in \emph{2018 IEEE Global
  Conference on Signal and Information Processing (GlobalSIP)}, 2018, pp.
  648--652.

\bibitem{xu2020coke}
P.~Xu, Y.~Wang, X.~Chen, and T.~Zhi, ``Coke: Communication-censored kernel
  learning for decentralized non-parametric learning,'' \emph{arXiv preprint
  arXiv:2001.10133}, 2020.

\bibitem{zhu2019broadband}
G.~Zhu, Y.~Wang, and K.~Huang, ``Broadband analog aggregation for low-latency
  federated edge learning,'' \emph{IEEE Transactions on Wireless
  Communications}, vol.~19, no.~1, pp. 491--506, 2019.

\bibitem{cao2019optimal}
X.~Cao, G.~Zhu, J.~Xu, and K.~Huang, ``Optimal power control for over-the-air
  computation,'' in \emph{2019 IEEE Global Communications Conference
  (GLOBECOM)}.\hskip 1em plus 0.5em minus 0.4em\relax IEEE, 2019, pp. 1--6.

\bibitem{yang2020federated}
K.~Yang, T.~Jiang, Y.~Shi, and Z.~Ding, ``Federated learning via over-the-air
  computation,'' \emph{IEEE Transactions on Wireless Communications}, vol.~19,
  no.~3, pp. 2022--2035, 2020.

\bibitem{zhu2020one}
G.~Zhu, Y.~Du, D.~Gunduz, and K.~Huang, ``One-bit over-the-air aggregation for
  communication-efficient federated edge learning: Design and convergence
  analysis,'' \emph{arXiv preprint arXiv:2001.05713}, 2020.

\bibitem{amiri2020machine}
M.~M. Amiri and D.~G{\"u}nd{\"u}z, ``Machine learning at the wireless edge:
  Distributed stochastic gradient descent over-the-air,'' \emph{IEEE
  Transactions on Signal Processing}, vol.~68, pp. 2155--2169, 2020.

\bibitem{amiri2020federated1}
------, ``Federated learning over wireless fading channels,'' \emph{IEEE
  Transactions on Wireless Communications}, vol.~19, no.~5, pp. 3546--3557,
  2020.

\bibitem{amiri2019collaborative}
M.~M. Amiri, T.~M. Duman, and D.~G{\"u}nd{\"u}z, ``Collaborative machine
  learning at the wireless edge with blind transmitters,'' \emph{arXiv preprint
  arXiv:1907.03909}, 2019.

\bibitem{sun2019energy}
Y.~Sun, S.~Zhou, and D.~G{\"u}nd{\"u}z, ``Energy-aware analog aggregation for
  federated learning with redundant data,'' \emph{arXiv preprint
  arXiv:1911.00188}, 2019.

\bibitem{nazer2007computation}
B.~Nazer and M.~Gastpar, ``Computation over multiple-access channels,''
  \emph{IEEE Transactions on information theory}, vol.~53, no.~10, pp.
  3498--3516, 2007.

\bibitem{zhang2020gradient}
N.~Zhang and M.~Tao, ``Gradient statistics aware power control for over-the-air
  federated learning in fading channels,'' \emph{arXiv preprint
  arXiv:2003.02089}, 2020.

\bibitem{boufounos20081}
P.~T. Boufounos and R.~G. Baraniuk, ``1-bit compressive sensing,'' in
  \emph{2008 42nd Annual Conference on Information Sciences and Systems}.\hskip
  1em plus 0.5em minus 0.4em\relax IEEE, 2008, pp. 16--21.

\bibitem{jacques2013robust}
L.~Jacques, J.~N. Laska, P.~T. Boufounos, and R.~G. Baraniuk, ``Robust 1-bit
  compressive sensing via binary stable embeddings of sparse vectors,''
  \emph{IEEE Transactions on Information Theory}, vol.~59, no.~4, pp.
  2082--2102, 2013.

\bibitem{dai2016noisy}
D.-Q. Dai, L.~Shen, Y.~Xu, and N.~Zhang, ``Noisy 1-bit compressive sensing:
  models and algorithms,'' \emph{Applied and Computational Harmonic Analysis},
  vol.~40, no.~1, pp. 1--32, 2016.

\bibitem{konevcny2016federated}
J.~Kone{\v{c}}n{\`y}, H.~B. McMahan, F.~X. Yu, P.~Richt{\'a}rik, A.~T. Suresh,
  and D.~Bacon, ``Federated learning: Strategies for improving communication
  efficiency,'' \emph{arXiv preprint arXiv:1610.05492}, 2016.

\bibitem{hale2007fixed}
E.~T. Hale, W.~Yin, and Y.~Zhang, ``A fixed-point continuation method for
  l1-regularized minimization with applications to compressed sensing,''
  \emph{CAAM TR07-07, Rice University}, vol.~43, p.~44, 2007.

\bibitem{moshtaghpour2015consistent}
A.~Moshtaghpour, L.~Jacques, V.~Cambareri, K.~Degraux, and C.~De~Vleeschouwer,
  ``Consistent basis pursuit for signal and matrix estimates in quantized
  compressed sensing,'' \emph{IEEE signal processing letters}, vol.~23, no.~1,
  pp. 25--29, 2015.

\bibitem{donoho2006compressed}
D.~L. Donoho, ``Compressed sensing,'' \emph{IEEE Transactions on information
  theory}, vol.~52, no.~4, pp. 1289--1306, 2006.

\bibitem{bubeck2014convex}
S.~Bubeck, ``Convex optimization: Algorithms and complexity,'' \emph{arXiv
  preprint arXiv:1405.4980}, 2014.

\bibitem{Bertsekas1996Neuro}
D.~P. Bertsekas, J.~N. Tsitsiklis, and J.~Tsitsiklis, \emph{Neuro-Dynamic
  Programming}.\hskip 1em plus 0.5em minus 0.4em\relax Athena Scientific, 1996.

\bibitem{friedlander2012hybrid}
M.~P. Friedlander and M.~Schmidt, ``Hybrid deterministic-stochastic methods for
  data fitting,'' \emph{SIAM Journal on Scientific Computing}, vol.~34, no.~3,
  pp. A1380--A1405, 2012.

\bibitem{stich2018sparsified}
S.~U. Stich, J.-B. Cordonnier, and M.~Jaggi, ``Sparsified sgd with memory,'' in
  \emph{Advances in Neural Information Processing Systems}, 2018, pp.
  4447--4458.

\bibitem{wang2018cooperative}
J.~Wang and G.~Joshi, ``Cooperative sgd: A unified framework for the design and
  analysis of communication-efficient sgd algorithms,'' \emph{arXiv preprint
  arXiv:1808.07576}, 2018.

\bibitem{boyd2004convex}
S.~Boyd, S.~P. Boyd, and L.~Vandenberghe, \emph{Convex optimization}.\hskip 1em
  plus 0.5em minus 0.4em\relax Cambridge university press, 2004.

\bibitem{boyd2011distributed}
S.~Boyd, N.~Parikh, and E.~Chu, \emph{Distributed optimization and statistical
  learning via the alternating direction method of multipliers}.\hskip 1em plus
  0.5em minus 0.4em\relax Now Publishers Inc, 2011.

\bibitem{ghadimi2014optimal}
E.~Ghadimi, A.~Teixeira, I.~Shames, and M.~Johansson, ``Optimal parameter
  selection for the alternating direction method of multipliers (admm):
  quadratic problems,'' \emph{IEEE Transactions on Automatic Control}, vol.~60,
  no.~3, pp. 644--658, 2014.

\bibitem{candes2008restricted}
E.~J. Candes \emph{et~al.}, ``The restricted isometry property and its
  implications for compressed sensing,'' \emph{Comptes rendus mathematique},
  vol. 346, no. 9-10, pp. 589--592, 2008.

\bibitem{candes2006stable}
E.~J. Candes, J.~K. Romberg, and T.~Tao, ``Stable signal recovery from
  incomplete and inaccurate measurements,'' \emph{Communications on Pure and
  Applied Mathematics: A Journal Issued by the Courant Institute of
  Mathematical Sciences}, vol.~59, no.~8, pp. 1207--1223, 2006.

\end{thebibliography}
\end{document}